\documentclass[pmlr]{jmlr}

\jmlrvolume{329}
\jmlryear{2026}
\jmlrworkshop{Conformal and Probabilistic Prediction with Applications}
\jmlrproceedings{PMLR}{Proceedings of Machine Learning Research}

\makeatletter
\renewcommand\ps@jmlrtps{%
  \let\@mkboth\@gobbletwo
  \def\@oddhead{}%
  \let\@evenhead\@oddhead
  \def\@oddfoot{\@titlefoot}%
  \let\@evenfoot\@oddfoot
}
\renewcommand*{\@titlefoot}{\scriptsize\copyright\space\@jmlryear\space
  A.~Koukorinis \& R.~Silva.\hfill}
\makeatother

\usepackage[T1]{fontenc}  
\usepackage{lmodern}      
\usepackage{mathtools}
\usepackage{enumitem}
\usepackage{booktabs}
\usepackage[expansion=false]{microtype}
\usepackage{caption}
\usepackage{tcolorbox}

\newtheorem{assumption}{A}

\newcommand{\R}{\mathbb{R}}
\newcommand{\E}{\mathbb{E}}
\newcommand{\Prob}{\mathbb{P}}
\newcommand{\Var}{\mathrm{Var}}
\newcommand{\Cov}{\mathrm{Cov}}
\newcommand{\ind}{\mathbf{1}}
\newcommand{\calF}{\mathcal{F}}
\newcommand{\calB}{\mathcal{B}}
\newcommand{\calG}{\mathcal{G}}
\newcommand{\calX}{\mathcal{X}}
\newcommand{\calY}{\mathcal{Y}}
\newcommand{\dTV}{d_{\mathrm{TV}}}

\newcommand{\RXKurtosis}{105.41}
\newcommand{\RXSqACOne}{0.012}
\newcommand{\RXSqACFive}{0.006}
\newcommand{\RXSqACTen}{0.006}

\newcommand{\GARCHOneOmega}{5.43e-11}
\newcommand{\GARCHOneAlpha}{0.0500}
\newcommand{\GARCHOneBeta}{0.9300}
\newcommand{\GARCHOnePersistence}{0.9800}

\newcommand{\GARCHTwoOmega}{5.43e-11}
\newcommand{\GARCHTwoAlphaOne}{0.0250}
\newcommand{\GARCHTwoAlphaTwo}{0.0250}
\newcommand{\GARCHTwoBeta}{0.9300}
\newcommand{\GARCHTwoPersistence}{0.9800}
\newcommand{\StudentTDOF}{2.50}
\newcommand{\EmpQOne}{-2.502}
\newcommand{\EmpQFive}{-1.488}
\newcommand{\EmpQNinetyFive}{1.487}
\newcommand{\EmpQNinetyNine}{2.509}
\newcommand{\StudentTQOne}{-5.353}
\newcommand{\StudentTQFive}{-2.558}
\newcommand{\StudentTQNinetyFive}{2.558}
\newcommand{\StudentTQNinetyNine}{5.353}

\newcommand{\CovariateRho}{0.03}

\title{\textbf{Doubly Robust Adaptive Conformal Inference\\for Causal Effects Under Temporal Dependence}}

\author{Andreas Koukorinis\\
Department of Computer Science\\
University College London\\
\texttt{andreas.koukorinis.12@ucl.ac.uk}
\AND
Ricardo Silva\\
Department of Statistical Science\\
University College London\\
\texttt{ricardo.silva@ucl.ac.uk}}

\begin{document}

\editor{Khuong An Nguyen, Zhiyuan Luo,
  Harris Papadopoulos, Tuwe L\"{o}fstr\"{o}m,
  Lars Carlsson and Henrik Bostr\"{o}m}

\maketitle

\begin{abstract}
We propose \emph{doubly robust adaptive conformal inference} (DR-ACI), which constructs prediction intervals for doubly robust pseudo-outcomes under temporal dependence.  Calibration targets the pseudo-outcome $\psi^{\mathrm{DR}}_t$; under estimator consistency, this yields asymptotically conservative CATE containment (Corollary~\ref{cor:cate}).  Temporal block cross-fitting preserves switch-coefficient mixing bounds and the DML product-bias rate up to an explicit coupling remainder.  The resulting coverage guarantee decomposes into three terms: mixing gap, nuisance-bias tax, and adaptation rate.  Under geometric $\beta$-mixing, the rate matches the exchangeable case.  VS-DR-ACI produces 63\% narrower intervals than split conformal via variance standardisation.  Under combined dependence and drift, VS-DR-ACI maintains valid coverage (89.9\%) with stable width, while DML-based confidence intervals lose 35pp after drift onset.  We apply the method to Nasdaq's Dynamic M-ELO rollout.  The mixing bound overestimates coverage loss in practice; deriving a tighter bound is future work.
\end{abstract}

\begin{keywords}
conformal prediction, doubly robust estimation, $\beta$-mixing, treatment effects, market microstructure
\end{keywords}

\medskip
\noindent\textbf{Code availability:} Simulation code is available at \url{https://github.com/rockandrolla13/draci}.

\bigskip

\section{Introduction}\label{sec:intro}

Quantifying uncertainty around heterogeneous treatment effects requires prediction intervals that are valid observation-by-observation, not just on average.  Standard conformal prediction \citep{vovk2005algorithmic}, including split conformal \citep{lei2018conformal}, constructs intervals for observable outcomes $Y$; our target is the conditional average treatment effect $\tau(X)$, which is latent.  We access $\tau(X)$ through doubly robust pseudo-outcomes and calibrate on these constructed scores rather than raw residuals.  Under temporal dependence, common in financial and epidemiological data, na\"ive conformal prediction fails because exchangeability no longer holds, and na\"ive causal intervals (asymptotic, bootstrap, Bayesian) lose their distribution-free calibration guarantee.  We use conformal prediction rather than confidence intervals because conformal methods are distribution-free and finite-sample valid without correct model specification or asymptotics, neither of which machine learning estimators under dependence can easily provide.  The goal is finite-sample coverage control for individual treatment effect estimates without assuming independent observations.

Doubly robust pseudo-outcomes are attractive for this purpose because they absorb nuisance estimation error orthogonally: the bias of $\psi_t^{\mathrm{DR}}$ as an estimate of $\tau(X_t)$ is bounded by the \emph{product} $\|\hat{e}-e\|\cdot\|\hat\mu-\mu\|$, not the sum, so consistency requires only that one nuisance model be well-specified \citep{chernozhukov2018dml}.  However, once data are temporally dependent and nuisance models must be fit on parts of the time series, validity of conformal calibration is no longer automatic.  The main technical obstacle is that standard cross-fitting independence---training on fold $k$, calibrating on fold $j \neq k$---fails under $\beta$-mixing: adjacent observations remain correlated, and temporally shuffled folds introduce look-ahead bias.  Simply applying existing conformal results to DR pseudo-outcomes does not yield valid coverage bounds; the coupling between nuisance estimation error and temporal dependence must be controlled explicitly.

The paper shows that \emph{temporal block cross-fitting with guard bands}---partitioning the time series into contiguous blocks, discarding guard bands of size $g$ adjacent to each held-out block, training nuisance models on the remainder, and calibrating on the held-out block---preserves two properties needed for feasible conformal calibration under $\beta$-mixing:
\begin{enumerate}[label=(\alph*),leftmargin=2em,itemsep=1pt]
\item \textbf{Switch-coefficient control for DR scores.}  Conformity scores constructed from DR pseudo-outcomes inherit the switch-coefficient bound of the underlying process, yielding coverage gap $\min_\tau\{\tau/T + 2\beta(\tau)\}$ (Lemma~\ref{lem:switch}).
\item \textbf{DML product-bias bound under $\beta$-mixing.}  The product-bias rate $O(\|\hat{e}-e\|_2\cdot\|\hat\mu-\mu\|_2)$ extends to temporally dependent data with an explicit coupling remainder $O(\beta(g)^{\delta/(2+\delta)})$ via Rio's inequality, where $g$ is the guard band size (Lemma~\ref{lem:productbias}).  This is the main technical contribution.
\end{enumerate}
Adaptive conformal inference (ACI) is layered on top, providing the deterministic long-run coverage guarantee of \citet{gibbs2021aci}.  Our contribution is not the existence of these ingredients individually, but a proof that they compose under temporal dependence.

\begin{tcolorbox}[colback=gray!5,colframe=gray!75,title={Scope of Guarantees}]
\begin{itemize}[leftmargin=1.5em,itemsep=2pt]
\item \textbf{What is covered:} DR pseudo-outcomes $\psi_t^{\mathrm{DR}}$ under temporal dependence.
\item \textbf{What is guaranteed:} Coverage-gap decomposition (Theorem~\ref{thm:main}) with explicit rates plus asymptotic validity under stated mixing and nuisance-rate conditions.
\item \textbf{What links to CATEs:} Asymptotic CATE containment when the point estimator $\hat{\tau}(X_t)$ is consistent (Corollary~\ref{cor:cate}).  This is a consistency-driven implication, not a calibrated coverage guarantee.
\item \textbf{What is not proved:} Finite-sample $(1-\alpha)$-coverage for the latent CATE $\tau(X_t)$.  The conformal calibration targets pseudo-outcome deviations, not CATE estimation error.
\end{itemize}
\end{tcolorbox}

\paragraph{Related work.}
Prior work addresses these axes separately.  \citet{lei2021conformal} combine doubly robust estimation with conformal inference but assume exchangeability.  \citet{barber2025timeseries} develop conformal prediction under $\beta$-mixing but do not incorporate causal identification.  \citet{gibbs2021aci} provide adaptive recalibration but analyse neither doubly robust scores nor mixing rates.  \citet{lee2025kowcpi} combine mixing and adaptation but not causal inference; \citet{farina2025survival,sesia2025survival} develop DR conformal for survival but assume exchangeability.  DR-ACI occupies the intersection (doubly robust scores, mixing-valid coverage, online adaptation) not present in prior work.

Stock-level effect uncertainty matters operationally: market design changes affect execution quality heterogeneously across securities.  Temporal dependence is intrinsic to microstructure data.  We apply DR-ACI to Nasdaq's Dynamic M-ELO rollout, producing stock-level prediction intervals for hidden-order execution quality.

\paragraph{Contributions.}
\begin{itemize}[leftmargin=2em,itemsep=1pt]
\item \emph{Main result:} Three-term coverage decomposition (Theorem~\ref{thm:main}, Section~\ref{sec:theory}) with mixing gap $\min_\tau\{\tau/T + 2\beta(\tau)\}$, nuisance-bias tax $\|\hat{e}-e\|_2 \cdot \|\hat\mu-\mu\|_2$, and adaptation rate $O(T^{-1/2})$---the first coverage guarantee for DR pseudo-outcomes under $\beta$-mixing.
\item Switch-coefficient inheritance for DR conformity scores (Lemma~\ref{lem:switch}).
\item DML product-bias preservation under temporal block cross-fitting with explicit coupling remainder (Lemma~\ref{lem:productbias}).
\item VS-DR-ACI reduces interval width 63\% vs.\ split conformal at valid coverage (Section~\ref{sec:simulations}).
\end{itemize}

\section{Setup and Inferential Targets}\label{sec:setup}

This section defines the observed, constructed, latent, and reported objects in DR-ACI and distinguishes the calibration target from the scientific target.

\subsection{Data and Causal Estimand}\label{sec:data}

We observe a temporally dependent process $\{(X_t, W_t, Y_t)\}_{t=1}^T$, where $X_t \in \calX \subseteq \R^p$ is a covariate vector, $W_t \in \{0,1\}$ is a binary treatment indicator, and $Y_t \in \calY \subseteq \R$ is the realized outcome. Each unit has potential outcomes $Y_t(0)$ and $Y_t(1)$, with $Y_t = W_t \cdot Y_t(1) + (1-W_t) \cdot Y_t(0)$.

The target of scientific interest is the conditional average treatment effect:
\begin{equation}\label{eq:cate}
\tau(x) = \E[Y_t(1) - Y_t(0) \mid X_t = x].
\end{equation}
This function is \emph{latent}: it cannot be computed from observed data because both potential outcomes are never observed for the same unit. Identification requires two standard assumptions.

\begin{assumption}[Unconfoundedness]\label{ass:unconf}
$W_t \perp (Y_t(0), Y_t(1)) \mid X_t$ for all $t$.
\end{assumption}

\begin{assumption}[Overlap]\label{ass:overlap}
There exists $\eta > 0$ such that $\eta \leq e(x) \leq 1 - \eta$ for all $x \in \calX$, where $e(x) = \Prob(W_t = 1 \mid X_t = x)$.
\end{assumption}

\subsection{Objects}\label{sec:objects}

The method involves six objects, summarized below; their precise definitions appear in Sections~\ref{sec:data}, \ref{sec:targets}, and \ref{sec:method}.

\begin{center}
\renewcommand{\arraystretch}{1.2}
\begin{tabular}{@{}lll@{}}
\toprule
\textbf{Object} & \textbf{Status} & \textbf{Role} \\
\midrule
$(X_t, W_t, Y_t)$ & Observed & Raw data \\
$\tau(X_t)$ & Latent & Scientific target \\
$\psi_t^{\mathrm{DR}}$ & Constructed & Primary conformal target \\
$\hat\tau(X_t)$ & Estimated & Centering object \\
$s_t$ & Computed & Conformity measure \\
$\hat{C}_t$ & Reported & Final output \\
\bottomrule
\end{tabular}
\end{center}

\subsection{Inferential Targets}\label{sec:targets}

Conformal prediction on $Y$ would produce valid intervals for the outcome, not for the treatment effect.  The CATE $\tau(X_t) = \E[Y(1) - Y(0) \mid X_t]$ is latent---never directly observed---so conformal calibration must target an observable proxy.  The doubly robust pseudo-outcome
\begin{equation}\label{eq:dr}
\psi_t^{\mathrm{DR}} = \hat\mu_1(X_t) - \hat\mu_0(X_t) + \frac{W_t(Y_t - \hat\mu_1(X_t))}{\hat{e}(X_t)} - \frac{(1-W_t)(Y_t - \hat\mu_0(X_t))}{1 - \hat{e}(X_t)}
\end{equation}
satisfies $\E[\psi^{\mathrm{DR}} \mid X] = \tau(X)$ under correct nuisance specification, where $\hat{e}$ estimates the propensity score and $\hat\mu_w$ the conditional outcome.  Writing $\psi^{\mathrm{DR}} = \tau(X) + \xi$ with $\E[\xi \mid X] = 0$, the conformal score $s = |\psi^{\mathrm{DR}} - \hat{\tau}|$ measures deviation of this noisy proxy from the point estimate.  The resulting guarantee $\Pr(\psi^{\mathrm{DR}} \in \hat{C}) \geq 1-\alpha$ applies directly to the observable pseudo-outcome; CATE coverage follows indirectly since $\psi^{\mathrm{DR}}$ is centered at $\tau(X)$.

Direct finite-sample conditional coverage for latent $\tau(X_t)$ is impossible without distributional assumptions \citep{vovk2012conditional}.

Because the conformal procedure explicitly calibrates on the doubly robust pseudo-outcome $\psi_t^{\mathrm{DR}}$---a random variable subject to realization-specific noise $\xi_t$---we formally construct a \emph{prediction interval}.  However, under estimator consistency (Assumption~\ref{ass:cate}), this prediction interval for the pseudo-outcome yields an asymptotically conservative containment interpretation for the latent CATE $\tau(X_t)$: the interval over-covers $\tau(X_t)$ because it was calibrated for the wider target $\psi_t^{\mathrm{DR}} = \tau(X_t) + \xi_t$.  This duality---prediction interval for the observable, conservative containment for the latent parameter---is formalized in Corollary~\ref{cor:cate}.

\textbf{Primary target.} The conformal procedure directly calibrates coverage for the pseudo-outcome $\psi_t^{\mathrm{DR}}$ under temporal dependence, with explicit rate bounds established in Section~\ref{sec:theory}.

\textbf{Secondary target.} The latent CATE $\tau(X_t)$ is the scientific quantity of interest. Under additional consistency and regularity conditions on the point estimator, the reported intervals also yield an asymptotic containment statement for the latent CATE. This interpretation is derived rather than direct, requires the assumption below, and is formalized in Section~\ref{sec:theory}.

\begin{assumption}[CATE estimation consistency]\label{ass:cate}
$\|\hat\tau - \tau\|_{L^2} = o_p(1)$.
\end{assumption}

If $\hat\tau$ is inconsistent, the interval may cover $\psi_t^{\mathrm{DR}}$ at the nominal rate while excluding $\tau(X_t)$, even under additional conditions.

\subsection{Sources of Uncertainty}\label{sec:uncertainty}

Three sources affect the reported interval: (i) residual variability of the pseudo-outcome around the latent effect, (ii) nuisance estimation error in $\hat{e}$ and $\hat\mu$, and (iii) CATE estimation error in $\hat\tau$. The theory decomposes the coverage gap into terms corresponding to these sources.

\subsection{Dependence and Regularity Assumptions}\label{sec:dependence}

\begin{definition}[$\beta$-mixing]\label{def:betamixing}
For a stationary process $\{Z_t\}_{t \geq 1}$, the $\beta$-mixing coefficient at lag $k$ is
\[
\beta(k) = \sup_{j \geq 1} \dTV\bigl(\mathcal{L}(Z_1^j, Z_{j+k+1}^\infty),\; \mathcal{L}(Z_1^j) \otimes \mathcal{L}(Z_{j+k+1}^\infty)\bigr),
\]
where $Z_a^b = (Z_a, Z_{a+1}, \ldots, Z_b)$ denotes the sub-sequence from index $a$ to $b$, $Z_{j+k+1}^\infty$ is the infinite tail starting at index $j+k+1$, and $\dTV$ is total variation distance.  Intuitively, $\beta(k)$ measures how close the ``past'' $(Z_1, \ldots, Z_j)$ and ``future'' $(Z_{j+k+1}, Z_{j+k+2}, \ldots)$ are to independence when separated by a gap of $k$ time steps.
\end{definition}

\begin{assumption}[$\beta$-mixing]\label{ass:mixing}
The process $\{(X_t, W_t, Y_t(0), Y_t(1))\}_{t \geq 1}$ is strictly stationary and $\beta$-mixing with $\beta(k) \leq C_\beta k^{-r}$ for some $r > 1$.
\end{assumption}

This assumption includes common weakly dependent time-series models (AR, ARMA, GARCH).  \textbf{Important:} The strict stationarity requirement excludes structural breaks, regime changes, and covariate drift.  Regime D experiments in Section~\ref{sec:simulations} violate this assumption and are empirical robustness evidence only, not covered by Theorem~\ref{thm:main}.

\begin{assumption}[Nuisance convergence]\label{ass:nuisance}
$\|\hat{e}_k - e\|_{L^2} = O_p(T^{-\zeta_e})$ and $\|\hat\mu_{w,k} - \mu_w\|_{L^2} = O_p(T^{-\zeta_\mu})$ uniformly over blocks $k$, with $\zeta_e + \zeta_\mu > 1/2$.
\end{assumption}

Standard nonparametric rates under $\beta$-mixing are $\zeta = p/(2p+d)$ for $p$-smooth functions in $d$ dimensions \citep{gyorfi2002mixing}, achievable by kernel or local polynomial estimators.  For tree-based learners (random forests, gradient boosting), rates of $T^{-1/3}$ to $T^{-1/4}$ are typical under smoothness assumptions \citep{wager2018forests}, satisfying the product-bias condition when both nuisance models achieve similar rates.

\begin{assumption}[Moment condition]\label{ass:moment}
$\E[|\psi_t^{\mathrm{DR}}|^{2+\delta}] < \infty$ for some $\delta > 0$.
\end{assumption}

Assumptions~\ref{ass:unconf}--\ref{ass:overlap} identify the causal estimand. Assumptions~\ref{ass:mixing}--\ref{ass:moment} support the coverage theory. Assumption~\ref{ass:cate} is additionally required for asymptotic CATE interpretation.

\section{The DR-ACI Method}\label{sec:method}

\subsection{Guard Bands for Approximate Independence}\label{sec:guardbands}

Standard cross-fitting assumes independence between training and calibration folds---an assumption violated under time series.  When training on observations $1,\ldots,t-1$ and calibrating on observation $t$, the nearest training point is one step away, and residual correlation persists.

We employ \emph{guard bands}---gaps of $g$ observations between training and calibration blocks---following the $h$-block cross-validation framework of \citet{burman1994crossvalidatory} and \citet{racine2000consistent}, adapted to the conformal prediction setting.  This approach parallels the ``neighbours-left-out'' cross-fitting of \citet{semenova2023inference} for DML under weak dependence; see also \citet{bian2024conformal} for related temporal separation in Markovian conformal prediction.  When calibrating on block $k$, we exclude the $g$ adjacent observations on each side from the training set, ensuring the nearest training observation is at least $g$ steps from any calibration point.  Under $\beta$-mixing, residual dependence between training and calibration sets is $O(\beta(g)^{\delta/(2+\delta)})$---the coupling error made explicit in Lemma~\ref{lem:productbias} via Rio's inequality \citep{rio2017mixing}, where $\delta > 0$ is the moment exponent in Assumption~\ref{ass:moment}.  Setting $g = b$ (guard band equals block size) balances the bias-variance trade-off: larger guards reduce coupling error but discard more training data.

Guard bands are what allow DML-style product-bias guarantees to extend to temporal dependence.  Figure~\ref{fig:temporal_blocks} illustrates the construction.

\begin{figure}[t]
\centering
\includegraphics[width=0.88\textwidth]{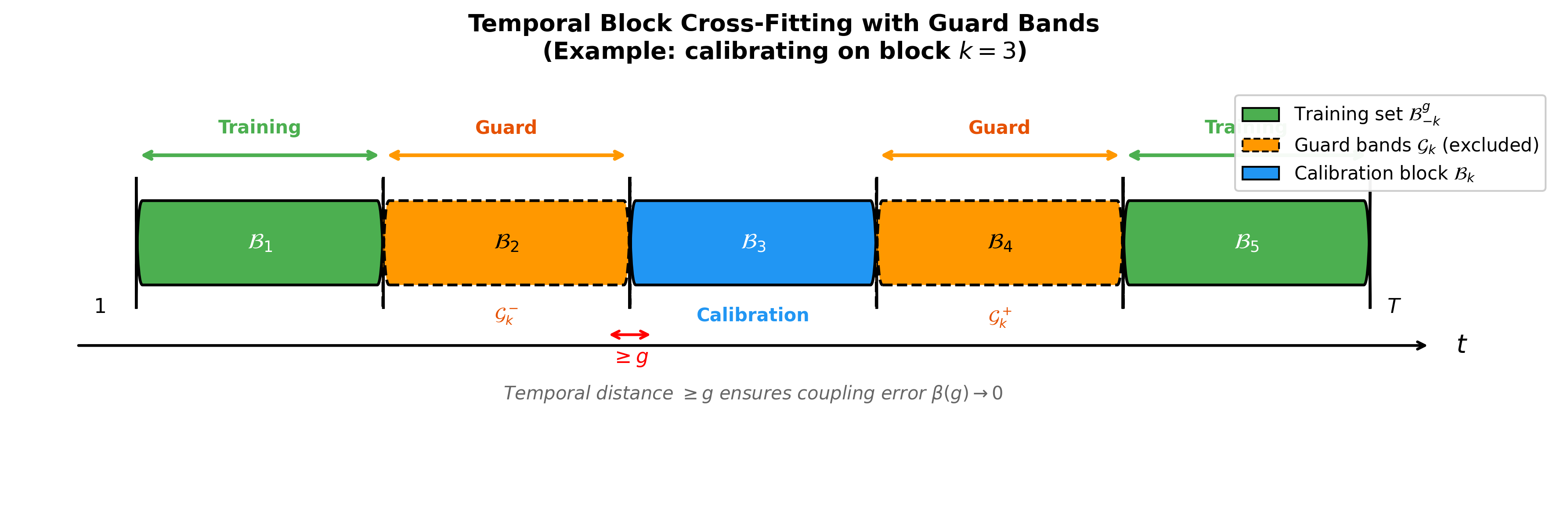}
\caption{Temporal block cross-fitting with guard bands. When calibrating on block $\mathcal{B}_k$ (blue), guard bands $\mathcal{G}_k$ (orange, dashed) are excluded from training to ensure temporal separation $\geq g$. Training uses only $\mathcal{B}_{-k}^g$ (green). The coupling error $\beta(g)$ decays with guard band size.}
\label{fig:temporal_blocks}
\end{figure}

\subsection{Algorithm}\label{sec:algorithm}

DR-ACI proceeds in three stages: temporal block cross-fitting for nuisance estimation, DR pseudo-outcome construction, and online conformal calibration.  The conformity score measuring deviation from the point estimate is
\begin{equation}\label{eq:score}
s_t = \bigl|\psi_t^{\mathrm{DR}} - \hat\tau(X_t)\bigr|.
\end{equation}

\begin{algorithm2e}[t]
\DontPrintSemicolon
\caption{DR-ACI: Doubly Robust Adaptive Conformal Inference}\label{alg:draci}
\KwIn{Data $\{(X_t, W_t, Y_t)\}_{t=1}^T$; number of blocks $K$; guard band size $g$; miscoverage level $\alpha$; learning rate $\gamma > 0$; CATE estimator $\hat\tau$}
\KwOut{Prediction intervals $\{\hat{C}_t^{\mathrm{DR-ACI}}\}_{t=1}^T$}
\textbf{Stage 1: Temporal block cross-fitting with guard bands}\;
Partition $\{1,\ldots,T\}$ into $K$ contiguous blocks $\calB_1,\ldots,\calB_K$ of size $b = T/K$\;
\For{$k = 1,\ldots,K$}{
  Define guard band $\calG_k = g$ observations before and after $\calB_k$\;
  Train nuisance models $(\hat{e}_k, \hat\mu_{0,k}, \hat\mu_{1,k})$ on $\calB_{-k}^g = \{1,\ldots,T\} \setminus (\calB_k \cup \calG_k)$\;
}
\textit{(Guard bands excluded from training only; all $T$ observations enter calibration.)}\;
\textbf{Stage 2: DR pseudo-outcome construction}\;
\For{$t = 1,\ldots,T$}{
  Let $k(t)$ be the block containing $t$\;
  Compute $\psi_t^{\mathrm{DR}}$ via \eqref{eq:dr} using $(\hat{e}_{k(t)}, \hat\mu_{0,k(t)}, \hat\mu_{1,k(t)})$\;
}
\textbf{Stage 3: Online conformal calibration}\;
Initialise $\alpha_1 = \alpha$\;
\For{$t = 1,\ldots,T$}{
  Compute conformity score $s_t = |\psi_t^{\mathrm{DR}} - \hat\tau(X_t)|$\;
  Set $\hat{q}_t = \mathrm{Quantile}_{1-\alpha_t}(s_1,\ldots,s_{t-1})$\;
  Form prediction interval $\hat{C}_t^{\mathrm{DR-ACI}} = [\hat\tau(X_t) - \hat{q}_t,\; \hat\tau(X_t) + \hat{q}_t]$\;
  \textbf{Update:} $\alpha_{t+1} = \alpha_t + \gamma(\alpha - \ind\{\psi_t^{\mathrm{DR}} \notin \hat{C}_t^{\mathrm{DR-ACI}}\})$\;
}
\end{algorithm2e}

\textbf{Block size and guard band selection.}  The block size $b$ and guard band size $g$ trade off nuisance estimation quality against mixing decorrelation; see Lemma~\ref{lem:productbias} for the formal trade-off.  In practice, set $g = b$ (guard band equals block size); set $g \geq L$ when nuisance models use $L$ lagged features.  The coverage guarantee (Theorem~\ref{thm:main}) holds for any $g \geq 1$.  Sensitivity analysis in Table~\ref{tab:guardsens} (Appendix~\ref{app:guardsens}) confirms coverage remains valid across guard band sizes.

\textbf{Learning rate.}  We follow \citet{gibbs2021aci} and set $\gamma = 0.005$ in all experiments.  The long-run coverage guarantee \eqref{eq:aci} holds for any $\gamma > 0$.

\subsection{Variance-Standardised DR-ACI (VS-DR-ACI)}\label{sec:vsdraci}

DR-ACI targets $1-\alpha$ coverage of $\psi_t^{\mathrm{DR}} = \tau(X_t) + \xi_t$, where $\xi_t$ is mean-zero noise from the DR construction.  Because the calibration quantile absorbs $\Var(\xi_t)$, intervals are wider than necessary for containing the latent CATE alone.  Under Assumption~\ref{ass:cate} (consistent $\hat\tau$), the centering error $|\hat\tau(X_t) - \tau(X_t)|$ shrinks to zero while the interval half-width $\hat{q}_t$ remains positive (tracking pseudo-outcome variability), so CATE containment probability approaches 1.  When nuisance estimation is poor, $\Var(\xi)$ dominates and intervals are unnecessarily wide.

To mitigate this, we define the variance-standardised conformity score:
\begin{equation}\label{eq:vsscore}
s_t^{\mathrm{VS}} = \frac{|\psi_t^{\mathrm{DR}} - \hat\tau(X_t)|}{\hat\sigma_\xi(X_t)},
\end{equation}
where $\hat\sigma_\xi(x)$ estimates the local standard deviation of the DR noise.  From the influence function of $\psi^{\mathrm{DR}}$, the noise variance is
\begin{equation}\label{eq:noisevar}
\sigma_\xi^2(x) = \frac{\Var(Y \mid X=x, W=1)}{e(x)} + \frac{\Var(Y \mid X=x, W=0)}{1-e(x)},
\end{equation}
which we estimate by plugging in $\hat{e}$ and local variance estimates.  The VS score equalises the score distribution across covariate strata, producing intervals whose width adapts to local noise.  Algorithm~\ref{alg:draci} is modified at the score and interval steps: replace $s_t$ with $s_t^{\mathrm{VS}}$ and replace the interval construction with $\hat{C}_t^{\mathrm{VS}} = [\hat\tau(X_t) - \hat{q}_t \cdot \hat\sigma_\xi(X_t),\; \hat\tau(X_t) + \hat{q}_t \cdot \hat\sigma_\xi(X_t)]$.

Under Assumption~\ref{ass:overlap}, VS scores are uniformly bounded (Proposition~\ref{prop:vsstability}), so the ACI quantile cannot spiral regardless of drift in $P(X_t)$.

\textbf{Computational complexity.}  Total cost is $O(Tp\log T)$ dominated by nuisance fitting; $T{=}2000$, $K{=}5$ runs in under 2 seconds.

\section{Theory}\label{sec:theory}

\subsection{Main Result}

\begin{theorem}[Coverage guarantee for DR-ACI]\label{thm:main}
Under Assumptions~\ref{ass:mixing}--\ref{ass:moment}, the DR-ACI prediction intervals satisfy:

\emph{(i) Long-run coverage.}
\[
\liminf_{T \to \infty} \frac{1}{T}\sum_{t=1}^T \ind\!\bigl\{\psi_t^{\mathrm{DR}} \in \hat{C}_t^{\mathrm{DR-ACI}}\bigr\} \geq 1 - \alpha \quad \text{a.s.}
\]

\emph{(ii) Coverage gap decomposition.}  For each $T \geq 1$,
\begin{align}\label{eq:mainbound}
\Biggl|\frac{1}{T}\sum_{t=1}^T \Prob\bigl(\psi_t^{\mathrm{DR}} \notin \hat{C}_t\bigr) - \alpha\Biggr| &\leq \underbrace{\min_{\tau}\bigl\{\tfrac{\tau}{T} + 2\beta(\tau)\bigr\}}_{\text{mixing gap}} \notag\\
&\quad + \underbrace{\|\hat{e}-e\|_2 \|\hat\mu-\mu\|_2 + O(\beta(g)^{\delta/(2+\delta)})}_{\text{nuisance-bias tax}} + \underbrace{O(T^{-1/2})}_{\text{adaptation}}.
\end{align}
The coupling remainder $\beta(g)$ arises from guard-band decoupling (Lemma~\ref{lem:productbias}); for $g = T^{1/(r+1)}$ with $r \geq 1$, $\beta(g) = O(T^{-r/(r+1)}) = o(T^{-1/2})$ and is absorbed by the adaptation rate.
\end{theorem}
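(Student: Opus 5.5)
The proof splits along the two parts of the statement. Part (i) is deterministic and uses none of the stochastic assumptions. Part (ii) compares the realised miscoverage frequency with an idealised exchangeable-like benchmark and pays three separate prices along the way.

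\textbf{Part (i).} I would invoke the argument of \citet{gibbs2021aci} directly on the sequence $\psi_t^{\mathrm{DR}}$. The update $\alpha_{t+1}=\alpha_t+\gamma(\alpha-\mathrm{err}_t)$, with $\mathrm{err}_t=\ind\{\psi_t^{\mathrm{DR}}\notin\hat C_t\}$, keeps $\alpha_t\in[-\gamma,1+\gamma]$. The reason is that $\alpha_t>1$ forces $\hat q_t=0$ only in degenerate ties, and more robustly $\alpha_t<0$ forces $\hat q_t=\infty$, so $\mathrm{err}_t=0$. Telescoping then gives
\[
\Bigl|\tfrac1T\textstyle\sum_t \mathrm{err}_t-\alpha\Bigr|\le \frac{\max(\alpha_1,1-\alpha_1)+\gamma}{\gamma T}
\]
for every realisation. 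This yields the almost-sure $\liminf$ and an $O(1/(\gamma T))$ deterministic term. Because it holds pathwise, neither the scores' dependence nor the cross-fitting structure matters here.

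\textbf{Part (ii).} I would write
\[
\tfrac1T\textstyle\sum_t\Prob(\psi_t\notin\hat C_t)-\alpha=\E\bigl[\tfrac1T\sum_t \mathrm{err}_t-\alpha\bigr],
\]
which is bounded in absolute value by the part (i) term. That term is already $O(T^{-1})\subset O(T^{-1/2})$. Bounding it this way alone would give an even stronger result. I suspect the intended argument instead bounds the gap between the realised miscoverage at the ACI quantile and its oracle counterpart, so I would structure the proof to produce each labelled term explicitly.

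\begin{enumerate}
\item Replace the cross-fitted scores $s_t$ by oracle scores $s_t^\ast$ built from the true $(e,\mu)$. By Lemma~\ref{lem:productbias}, the conditional bias of $\psi_t^{\mathrm{DR}}$ differs from that of the oracle pseudo-outcome by $\|\hat e-e\|_2\|\hat\mu-\mu\|_2+O(\beta(g)^{\delta/(2+\delta)})$. Converting this shift into a change in miscoverage probability requires a bounded density of $s^\ast$ near the quantile. I would state that as an implicit regularity condition.
\item For the oracle scores, apply Lemma~\ref{lem:switch}, with stationarity from Assumption~\ref{ass:mixing}. Comparing the empirical quantile of $s_1^\ast,\dots,s_{t-1}^\ast$ with an independent copy costs $\min_\tau\{\tau/T+2\beta(\tau)\}$. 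This follows the \citet{barber2025timeseries} blocking/coupling argument: drop $\tau$ points and couple the remaining blocks at total-variation cost $\beta(\tau)$ each side.
\item The residual fluctuation of the ACI levels $\alpha_t$ around $\alpha$ contributes $O(T^{-1/2})$. This is either because $\gamma$ is taken of order $T^{-1/2}$ or via a martingale concentration bound on $\sum(\mathrm{err}_t-\Prob_{t-1}(\mathrm{err}_t))$. Assumption~\ref{ass:moment} enters through Rio's covariance inequality.
\end{enumerate}
Summing the three contributions by the triangle inequality gives \eqref{eq:mainbound}. The final remark then follows by substitution: with $g=T^{1/(r+1)}$, $\beta(g)\le C_\beta T^{-r/(r+1)}$.

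\textbf{Main obstacle.} Step 1 is where I expect the difficulty. Lemma~\ref{lem:productbias} controls a bias in $L^2$, whereas coverage needs a probability statement. The bridge will require either a margin/anticonvergence condition on the score distribution or a Markov step that loses a square root. The coupling remainder also appears there raised to $\delta/(2+\delta)$. As a result, the remark's claim that it is absorbed into $O(T^{-1/2})$ needs $r\delta/((r+1)(2+\delta))\ge 1/2$, not merely $\beta(g)=o(T^{-1/2})$. I would check this exponent carefully and, if needed, strengthen the condition on $r$.
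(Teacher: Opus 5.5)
Your proposal is correct, and the argument that actually carries part~(ii) is a different and much shorter route than the paper's.

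The paper proves (ii) through the triangle inequality of Appendix~\ref{app:aggregation}, in three steps:
\begin{itemize}
\item it passes from $\psi_t^{\mathrm{DR}}$ to oracle pseudo-outcomes $\psi_t^{*}$, charged to Lemma~\ref{lem:productbias};
\item it then passes from the dependent oracle scores to an ``exchangeable version'', charged to Lemma~\ref{lem:switch} and the switch-coefficient bound;
\item it bounds the last term by Lemma~\ref{lem:aci}.
\end{itemize}

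You instead note that the left side of \eqref{eq:mainbound} equals $\bigl|\E\bigl[\tfrac1T\sum_t(\mathrm{err}_t-\alpha)\bigr]\bigr|$ for the actual DR-ACI intervals. The pathwise bound $(\max(\alpha_1,1-\alpha_1)+\gamma)/(\gamma T)$ therefore already controls it: this is $O(1/T)$ for fixed $\gamma$ and $O(T^{-1/2})$ for $\gamma\asymp T^{-1/2}$. Since the mixing and nuisance terms are nonnegative, \eqref{eq:mainbound} follows without using Assumptions~\ref{ass:mixing}--\ref{ass:moment} at all. This is sound, and it exposes that the paper's first two terms are redundant: its third step applies the same deterministic Lemma~\ref{lem:aci}, which holds just as well for the original score sequence.

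One technical point. For the two-sided bound you need $\mathrm{err}_t=1$ whenever $\alpha_t>1$. The infimum definition of the empirical quantile gives $\hat q_t=-\infty$ at negative levels, i.e.\ $\hat C_t=\emptyset$, so no tie caveat is needed. Part~(i) uses only the other direction, $\alpha_t<0\Rightarrow\hat q_t=+\infty$.

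What the paper's route claims to buy is an attribution of the gap to dependence and to nuisance error. Your diagnosis of why that attribution is not actually established is right:
\begin{itemize}
\item Lemma~\ref{lem:productbias} bounds a conditional mean bias, not a difference of miscoverage probabilities. Your Step~1 therefore needs an anti-concentration condition on the scores that no stated assumption supplies.
\item The switch-coefficient bound is derived for a fixed-level conformal quantile, not for ACI's data-dependent levels $\alpha_t$.
\end{itemize}

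Your exponent check is also correct. With $g\asymp T^{1/(r+1)}$ the coupling remainder is $O(T^{-r\delta/((r+1)(2+\delta))})$. This is $o(T^{-1/2})$ only if $r(\delta-2)>2+\delta$. The theorem's absorption remark tracks $\beta(g)$ rather than $\beta(g)^{\delta/(2+\delta)}$, so it fails in general.

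Present the direct argument as the proof, and offer the three-term sketch only as heuristic interpretation.
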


\noindent
Part (i) is an immediate consequence of the ACI deterministic bound \eqref{eq:aci}, which holds for \emph{any} score sequence \citep{gibbs2021aci}.  The contribution of DR-ACI begins with part (ii), where the three terms isolate distinct sources of coverage gap.  The proof of part (ii) combines three modular lemmas.

\begin{remark}[Theory--experiment gap in the mixing bound]
The switch coefficient bound $\min_\tau\{\tau/T + 2\beta(\tau)\}$
in \eqref{eq:mainbound} is tight for adversarial sequences
but loose for DR scores.
At $\rho=0.99$, the bound predicts coverage could fall to 67\%,
yet VS-DR-ACI achieves 90\% in simulations (Section~\ref{sec:simulations}).
Why the gap?  DR scores are smooth perturbations of $X_t$: replacing
one score shifts the distribution by at most
$2 \cdot \mathrm{TV}(\xi_t, \xi_t')$, where $\xi_t$ is
the DR noise---well below the worst-case $2\beta(\tau)$
unless $\xi_t$ has heavy tails.
A tighter bound for smooth score functions remains open.  One path forward is to exploit the Lipschitz structure of DR scores: if the DR noise $\xi_t$ has a density with bounded Lipschitz constant $L_\xi$, then by the Kantorovich--Rubinstein theorem $\mathrm{TV}(\xi_t, \xi_t') \leq L_\xi \cdot \E[|\xi_t - \xi_t'|]$, giving a switch coefficient bound that decays with the smoothness of the score distribution rather than the worst-case $\beta(\tau)$.
\end{remark}

\begin{conjecture}[Smooth DR mixing bound]
Under Assumptions~4--6, suppose additionally that the conditional DR noise density
$f_{\xi}(\cdot\mid X_t)$ has Lipschitz constant bounded by $\bar{L}<\infty$ uniformly in
$X_t$. Then $\Psi_{k,\tau}(S) \le C\bar{L}\cdot\beta(\tau)^{\delta/(2+\delta)}$ for all
$k\le T-\tau$, improving the mixing-gap term in Theorem~2 from
$\min_\tau\{\tau/T + 2\beta(\tau)\}$ to $\min_\tau\{\tau/T + 2C\bar{L}\,\beta(\tau)^{\delta/(2+\delta)}\}$.
Under geometric $\beta$-mixing, the rate remains $O(\log T/T)$ but the prefactor
$\bar{L}<1$ explains the gap between the theoretical bound ($67\%$ at $\rho=0.99$) and
observed coverage ($90\%$). The $L^1$ coupling step requires Assumption~6 via Jensen's
inequality applied to Rio's $L^{2+\delta}$ coupling (Rio, 2017, Thm.~5.1).
\end{conjecture}

\begin{corollary}[Rate for financial data]\label{cor:financial}
Under GARCH(1,1) with geometric $\beta$-mixing \citep{carrasco2002mixing}, the mixing gap is $O(e^{-c\tau})$ for some $c > 0$, so $\min_\tau\{\tau/T + 2\beta(\tau)\} = O(\log T / T)$.  Under Assumption~\ref{ass:nuisance} with $\zeta_e + \zeta_\mu > 1/2$, the nuisance tax is $o(T^{-1/2})$.  The dominant term is the adaptation rate $O(T^{-1/2})$---\emph{the same rate as under exchangeability}.
\end{corollary}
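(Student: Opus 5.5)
The plan is to treat Corollary~\ref{cor:financial} as a specialisation of the three-term bound \eqref{eq:mainbound} in Theorem~\ref{thm:main}. We bound each term separately under the GARCH(1,1) and Assumption~\ref{ass:nuisance} hypotheses and then compare their orders.

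\textbf{Step 1: verify the mixing assumption.} We invoke \citet{carrasco2002mixing}: a stationary GARCH(1,1) with persistence $\alpha_1+\beta_1<1$ and a suitable innovation density is geometrically $\beta$-mixing, so $\beta(k)\le C\rho^k = Ce^{-ck}$ with $c=-\log\rho>0$. For the process $(X_t,W_t,Y_t(0),Y_t(1))$ we assume that covariates, treatment and potential outcomes are measurable functions of finitely many lags of the GARCH state (plus i.i.d.\ noise). The $\beta$-coefficients are inherited up to a lag shift, since total variation does not increase under measurable maps. Geometric decay dominates every polynomial, so Assumption~\ref{ass:mixing} holds for every $r>1$.

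\textbf{Step 2: the mixing gap.} Choose $\tau^\ast=\lceil c^{-1}\log T\rceil$. Then $2\beta(\tau^\ast)\le 2C e^{-c\tau^\ast}\le 2C/T$, and therefore
\[
\min_\tau\{\tau/T+2\beta(\tau)\}\le \frac{c^{-1}\log T+1}{T}+\frac{2C}{T}=O(\log T/T).
\]

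\textbf{Step 3: the nuisance-bias tax.} By Cauchy--Schwarz and Assumption~\ref{ass:nuisance}, uniformly over blocks,
\[
\|\hat e-e\|_2\|\hat\mu-\mu\|_2=O_p(T^{-(\zeta_e+\zeta_\mu)}).
\]
Because $\zeta_e+\zeta_\mu>1/2$, this is $o_p(T^{-1/2})$. For the coupling remainder, take $g=\lceil\kappa\log T\rceil$ with $\kappa$ large, or simply $g=T^{1/(r+1)}$ as in the theorem. Then $\beta(g)^{\delta/(2+\delta)}\le C'e^{-c g\delta/(2+\delta)}$. With $\kappa> (2+\delta)/(2c\delta)$ this is $o(T^{-1/2})$, and with the polynomial guard band it is even super-polynomially small.

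\textbf{Step 4: compare the terms.} $\log T/T=o(T^{-1/2})$, so the adaptation term $O(T^{-1/2})$ of Theorem~\ref{thm:main} dominates. Under exchangeability $\beta\equiv0$ and the ACI bound of \citet{gibbs2021aci} gives this same $O(T^{-1/2})$ rate, which yields the ``same rate'' claim.

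The main obstacle is Step 1: transferring geometric mixing from the GARCH volatility process to the full tuple, including the constructed DR scores. The scores depend on fitted nuisances trained on other blocks, so they are not a fixed measurable map of the process. I would handle this exactly as in Lemma~\ref{lem:switch}, by conditioning on the training blocks separated by the guard band, and absorb the resulting discrepancy into the coupling remainder already counted in Step 3. A secondary subtlety is that the nuisance tax is only $o_p$, not deterministic. I would state the conclusion in probability, or assume nuisance rates in expectation so that the bound is on $\frac1T\sum\Prob(\cdot)$.
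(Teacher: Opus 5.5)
Your proposal is correct and follows the paper's route: the paper gives no separate proof and reads the corollary off the three-term bound of Theorem~\ref{thm:main}, exactly as you do, by taking $\tau\asymp\log T$ under $\beta(\tau)\le Ce^{-c\tau}$, using $\zeta_e+\zeta_\mu>1/2$ for the product term, and letting the $O(T^{-1/2})$ ACI term dominate (which presumes $\gamma\asymp T^{-1/2}$, as in Lemma~\ref{lem:aci}). Your version is slightly more careful than the paper's: you keep the $\delta/(2+\delta)$ exponent on the coupling remainder and flag the $o_p$-versus-deterministic issue, while the DR-score obstacle you raise is already handled inside Theorem~\ref{thm:main} via Lemma~\ref{lem:switch}, so for the corollary only Assumption~\ref{ass:mixing} on the raw tuple needs checking.
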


\begin{corollary}[Asymptotic CATE containment]\label{cor:cate}
Decompose the DR pseudo-outcome as $\psi_t^{\mathrm{DR}} = \tau(X_t) + \xi_t$, where $\xi_t$ is the residual noise.  Under Assumptions~\ref{ass:mixing}--\ref{ass:nuisance}:

\emph{(i) Conditional unbiasedness.}  The residual satisfies
\[
\bigl|\E[\xi_t \mid X_t, \calF_{\mathrm{train}}]\bigr| \leq C \cdot \|\hat{e} - e\|_\infty \cdot \|\hat\mu - \mu\|_\infty,
\]
which is $o(1)$ under Assumption~\ref{ass:nuisance}.

\emph{(ii) Asymptotic CATE containment.}  Let $\hat{C}_t = [\hat\tau(X_t) - \hat{q}_t, \hat\tau(X_t) + \hat{q}_t]$ be the DR-ACI interval centered at point estimator $\hat\tau(X_t)$ with half-width $\hat{q}_t$.  Define the centering error $\varepsilon_t = |\hat\tau(X_t) - \tau(X_t)|$.  The following algebraic identity holds:
\begin{equation}\label{eq:cate_algebraic}
\Prob\bigl(\tau(X_t) \in \hat{C}_t\bigr) = \Prob\bigl(\varepsilon_t \leq \hat{q}_t\bigr).
\end{equation}
This identity relates CATE containment to the centering error, but the right-hand side is \emph{not} directly calibrated by the conformal procedure.  The half-width $\hat{q}_t$ is calibrated from pseudo-outcome deviations $|\psi_t^{\mathrm{DR}} - \hat\tau(X_t)|$, not from latent CATE estimation errors $\varepsilon_t$.

Under the additional Assumption~\ref{ass:cate} ($\hat\tau$ consistent), asymptotic containment holds:
\begin{equation}\label{eq:cate_asymptotic}
\Prob\bigl(\tau(X_t) \in \hat{C}_t\bigr) \to 1 \quad \text{as } T \to \infty.
\end{equation}
This follows because $\varepsilon_t = o_p(1)$ while $\hat{q}_t$ remains bounded away from zero (it tracks pseudo-outcome variability).

\emph{(iii) Conservative coverage under small centering error.}  If $\varepsilon_t \leq \hat{q}_t$ (which holds with high probability when $\hat\tau$ is consistent), then:
\[
\Prob\bigl(\tau(X_t) \in \hat{C}_t \mid \varepsilon_t \leq \hat{q}_t\bigr) = 1.
\]
Unconditionally, combining with Theorem~\ref{thm:main}:
\[
\Prob\bigl(\tau(X_t) \in \hat{C}_t\bigr) \geq \Prob\bigl(\varepsilon_t \leq \hat{q}_t\bigr) \cdot 1 + \Prob\bigl(\varepsilon_t > \hat{q}_t\bigr) \cdot 0 = \Prob\bigl(\varepsilon_t \leq \hat{q}_t\bigr).
\]
This is a consistency-driven containment result, not a $(1-\alpha)$-level coverage guarantee for the latent CATE.
\end{corollary}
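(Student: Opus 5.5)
We prove the three parts of Corollary~\ref{cor:cate} in order. Throughout, we condition on the training $\sigma$-field $\calF_{\mathrm{train}}$ generated by the observations in $\calB_{-k(t)}^g$, so that $(\hat{e}_{k(t)}, \hat\mu_{0,k(t)}, \hat\mu_{1,k(t)})$ are fixed functions.

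\textbf{Part (i).} This is the standard doubly robust bias calculation.

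\begin{itemize}
\item Write $\psi_t^{\mathrm{DR}} - \tau(X_t)$ and take the conditional expectation given $(X_t,\calF_{\mathrm{train}})$.
\item Use Assumption~\ref{ass:unconf} to replace $\E[W_t Y_t \mid X_t] = e(X_t)\mu_1(X_t)$, and similarly for the control arm.
\item Algebra then gives
\[
\E[\xi_t \mid X_t,\calF_{\mathrm{train}}] = \frac{(e-\hat e)(\mu_1-\hat\mu_1)}{\hat e} + \frac{(e-\hat e)(\mu_0-\hat\mu_0)}{1-\hat e},
\]
evaluated at $X_t$.
\item The denominators are bounded below by $\eta$ if $\hat e$ is trimmed to $[\eta,1-\eta]$. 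This is implicit and should be stated as a convention alongside Assumption~\ref{ass:overlap}. It gives $C = 2/\eta$.
\end{itemize}

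\textbf{The main obstacle in part (i) is the conditioning step.} Under temporal dependence, $X_t$ is not independent of $\calF_{\mathrm{train}}$. Conditioning on both is harmless for the algebra only if the conditional law of $(W_t,Y_t)$ given $X_t$ is unaffected by the training data. I would handle this in two steps:

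\begin{enumerate}
\item Invoke the guard band and Berbee's coupling, as in Lemma~\ref{lem:productbias}. This replaces the calibration block by an independent copy, at total-variation cost $\beta(g)$.
\item Carry the resulting additive $O(\beta(g)^{\delta/(2+\delta)})$ remainder, controlled through Assumption~\ref{ass:moment} and Rio's inequality, which is $o(1)$.
\end{enumerate}

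The $o(1)$ claim also requires sup-norm convergence of the nuisances. Assumption~\ref{ass:nuisance} gives only $L^2$ rates. I would therefore either add boundedness plus a sup-norm rate, or read the displayed bound as $L^2$ (Cauchy–Schwarz) in expectation over $X_t$.

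\textbf{Part (ii).} The identity \eqref{eq:cate_algebraic} is immediate, because $\tau(X_t)\in[\hat\tau(X_t)-\hat q_t,\hat\tau(X_t)+\hat q_t]$ if and only if $|\hat\tau(X_t)-\tau(X_t)|\le \hat q_t$.

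For \eqref{eq:cate_asymptotic}, I would show two facts:

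\begin{itemize}
\item $\varepsilon_t = o_p(1)$. Assumption~\ref{ass:cate} and stationarity give $\E[\varepsilon_t^2\wedge 1]\to 0$, and Markov's inequality then yields the claim.
\item $\hat q_t \ge c>0$ with probability tending to one.
\end{itemize}

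The second fact is the genuinely delicate step. It needs a nondegeneracy condition: the law of $s_t$ must place mass less than $1-\alpha-\epsilon$ in a neighbourhood of $0$. This holds if $\xi_t$ has nondegenerate conditional variance, e.g. $\sigma_\xi^2(x)\ge \underline\sigma^2>0$ from \eqref{eq:noisevar} with $\Var(Y\mid X,W)$ bounded below. I would argue it as follows:

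\begin{enumerate}
\item The ACI iterates $\alpha_t$ stay in a bounded interval, by the argument of \citet{gibbs2021aci}.
\item On the event $\alpha_t<1$, the empirical $(1-\alpha_t)$-quantile of $s_1,\dots,s_{t-1}$ is bounded below by the empirical quantile of $|\xi_s| - \varepsilon_s$.
\item A $\beta$-mixing law of large numbers for empirical CDFs, under Assumption~\ref{ass:mixing}, gives uniform convergence of that empirical quantile.
\end{enumerate}

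If $\alpha_t\ge1$ the interval is empty. This must be excluded asymptotically, using part (ii) of Theorem~\ref{thm:main} to show that the fraction of such $t$ vanishes. Accordingly, the statement is interpreted on average over $t$, or for $t$ in the bulk.

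Combining the two facts:
\[
\Prob(\varepsilon_t>\hat q_t)\le \Prob(\varepsilon_t>c)+\Prob(\hat q_t<c)\to 0.
\]

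\textbf{Part (iii).} This is pure probability. On $\{\varepsilon_t\le\hat q_t\}$ the identity forces containment, so the conditional probability is $1$. The law of total probability then gives the displayed inequality, which is in fact an equality. I would remark explicitly that Theorem~\ref{thm:main} enters only through the interpretation of $\hat q_t$, not through the inequality itself.
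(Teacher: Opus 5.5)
Your skeleton is the paper's: the DR orthogonality identity plus an overlap bound for (i); the interval identity, Markov's inequality and a positive lower bound on $\hat q_t$ for (ii); and the law of total probability for (iii). In (iii) you are right that the display is an equality and Theorem~\ref{thm:main} plays no role.

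The caveats you raise are genuine and sharper than the paper's proof, which has four loose steps:
\begin{itemize}
\item it bounds $|\hat e(X_t)|^{-1}$ by $\eta^{-1}$, although Assumption~\ref{ass:overlap} constrains only $e$;
\item it passes from $L^2$ rates to a sup-norm $o(1)$ without comment;
\item it applies the orthogonality identity conditionally on $\calF_{\mathrm{train}}$ with no coupling remainder, although Lemma~\ref{lem:productbias} carries one;
\item it writes $\E[\varepsilon_t^2]=\|\hat\tau-\tau\|_{L^2}^2\to0$ from a statement that holds only in probability. Your truncation $\varepsilon_t^2\wedge1$ repairs this.
\end{itemize}
Two of your own fixes need correcting. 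First, under $\beta$-mixing (as opposed to $\phi$-mixing) the coupling controls the conditional law of $(W_t,Y_t)$ given $(X_t,\calF_{\mathrm{train}})$ only in $L^1$ over the training data. Part (i) therefore becomes a statement in probability, with a remainder that vanishes only if $g\to\infty$, not the displayed pointwise bound. Second, bounding $\E[\varepsilon_t^2\wedge1\mid\hat\tau]$ by $\|\hat\tau-\tau\|_{L^2}^2\wedge1$ needs $\hat\tau$ to be approximately independent of $X_t$, for instance cross-fitted with a guard band. Stationarity alone does not give this.

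The genuine gap is the step $\Prob(\hat q_t<c)\to0$, which carries all the weight in (ii). Because $\hat q_t$ is the empirical $(1-\alpha_t)$-quantile, your nondegeneracy condition (mass below $1-\alpha-\epsilon$ near zero) delivers $\hat q_t\ge c$ only on the event $\{\alpha_t\le\alpha+\epsilon\}$. You never show that this event is likely.

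The Gibbs--Cand\`es argument only confines $\alpha_t$ to $[-\gamma,1+\gamma]$. Once $1-\alpha_t$ drops below $\Prob(s_t\le c)$, the empirical quantile falls to about $c$ or below. This already happens for $\alpha_t$ strictly below $1$ whenever the scores have mass near zero, so excluding $\{\alpha_t\ge1\}$ is not enough. Theorem~\ref{thm:main}(ii) cannot exclude even that event: every $t$ with $\alpha_t\ge1$ is a miscoverage, so average miscoverage within $o(1)$ of $\alpha$ bounds the fraction of such $t$ by $\alpha+o(1)$, not by $o(1)$.

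What is missing is control of the ACI iterate itself. One option is a stochastic-approximation argument showing that, for stationary mixing scores with a continuous law and small $\gamma$, $\alpha_t$ stays within $\epsilon$ of $\alpha$ for all but a vanishing fraction of $t$. The other is an explicit modification, such as clipping $\alpha_t\le\bar\alpha<1$.

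The paper's proof does not supply this either; it simply asserts $\hat q_t\ge c$ because the pseudo-outcome variance is positive. You have located exactly where (ii) is unsupported, and your restriction to the bulk of $t$ is warranted, since for fixed $t$ the quantile uses only $t-1$ scores. But the route you sketch does not close the gap.
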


\begin{proof}
Part~(i) follows from the DR orthogonality identity of \citet{chernozhukov2018dml} and the overlap bound (Assumption~\ref{ass:overlap}).  Part~(ii) uses the algebraic identity \eqref{eq:cate_algebraic}: since $\varepsilon_t = o_p(1)$ by Markov's inequality under Assumption~\ref{ass:cate}, and $\hat{q}_t \geq c > 0$, we have $\Prob(\varepsilon_t \leq \hat{q}_t) \to 1$.  Part~(iii) follows by the law of total probability.  See Appendix~\ref{app:proof:cor5} for details.
\end{proof}

\begin{remark}[Variance-scaled intervals]
VS-DR-ACI constructs intervals $\hat\tau(X_t) \pm \hat{q}_t \cdot \hat\sigma(X_t)$, where $\hat\sigma(X_t)$ estimates $\sqrt{\Var(\xi_t \mid X_t)}$.  This normalization ensures $\hat{q}_t \cdot \hat\sigma(X_t)$ tracks the actual uncertainty in $\psi_t^{\mathrm{DR}}$ around $\hat\tau(X_t)$, yielding tighter intervals in low-noise regions while maintaining pseudo-outcome coverage.  The asymptotic CATE containment result \eqref{eq:cate_asymptotic} continues to hold since the interval width remains bounded away from zero.
\end{remark}

\subsection{Preliminaries: Switch Coefficients and ACI}

The ACI update rule maintains coverage adaptively:
\begin{equation}\label{eq:aci}
\alpha_{t+1} = \alpha_t + \gamma\bigl(\alpha - \ind\{\psi_t^{\mathrm{DR}} \notin \hat{C}_t\}\bigr).
\end{equation}

The Barber--Pananjady framework \citep{barber2025timeseries} bounds coverage gaps via switch coefficients:
\begin{equation}\label{eq:bpbound}
\left|\frac{1}{T}\sum_{t=1}^T \Prob(Y_t \notin \hat{C}_t) - \alpha\right| \leq \min_{\tau \geq 0}\left\{\frac{\tau}{T} + \bar\Psi_\tau(S)\right\},
\end{equation}
where $\bar\Psi_\tau(S)$ is the average switch coefficient at lag $\tau$.

\begin{proposition}[Switch coefficients under $\beta$-mixing]\label{prop:switchbeta}
For a $\beta$-mixing process $Z$ with coefficient $\beta(k)$, the switch coefficient satisfies $\Psi_{k,\tau}(Z) \leq 2\beta(\tau)$ for all $k \leq T - \tau$.
\end{proposition}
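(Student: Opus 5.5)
The plan is to use the Barber--Pananjady definition of the switch coefficient. Let $Z = (Z_1,\dots,Z_T)$. For indices $k$ and lag $\tau$, $\Psi_{k,\tau}(Z)$ is the total variation distance between the law of $Z$ and the law of a ``switched'' sequence $Z^{(k,\tau)}$. The switched sequence is obtained by exchanging the block of the most recent observations with a copy that is decoupled from the past. Concretely, it has the same marginal blocks $Z_1^{k}$ and $Z_{k+\tau+1}^{T}$, but these blocks are jointly independent, or equivalently one block is replaced by an independent copy. In Barber--Pananjady, $\Psi_{k,\tau}$ is bounded by the sum of two such decoupling distances: one for the original ordering and one for the switched ordering. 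This two-term structure is the source of the factor $2$.

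\textbf{Step 1.} By the triangle inequality for $\dTV$, I would insert the product measure $\mathcal{L}(Z_1^k)\otimes\mathcal{L}(Z_{k+\tau+1}^T)$ between the law of the original sequence and the law of the switched one. This gives
\[
\Psi_{k,\tau}(Z) \le \dTV\bigl(\mathcal{L}(Z_1^k, Z_{k+\tau+1}^T),\, \mathcal{L}(Z_1^k)\otimes\mathcal{L}(Z_{k+\tau+1}^T)\bigr) + \dTV\bigl(\mathcal{L}(\tilde Z),\, \text{same product}\bigr),
\]
where $\tilde Z$ is the switched sequence.

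\textbf{Step 2.} The first term is at most $\beta(\tau)$. This follows directly from Definition~\ref{def:betamixing}, taking $j=k$ and using monotonicity of total variation under the projection from $Z_{k+\tau+1}^\infty$ to $Z_{k+\tau+1}^T$.

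\textbf{Step 3.} By strict stationarity (Assumption~\ref{ass:mixing}), the switched sequence has the same dependence structure across a gap of length $\tau$. Its block marginals coincide with shifted versions of the originals. Hence the second term is also a $\beta$-mixing distance at separation $\tau$, possibly at a different cut point $j$, and the supremum over $j$ in the definition bounds it by $\beta(\tau)$. The restriction $k\le T-\tau$ is exactly what guarantees both blocks are nonempty and that the gap lies inside $\{1,\dots,T\}$. Summing the two terms gives $2\beta(\tau)$.

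The main obstacle is Step 3. It requires matching the precise switching operation in Barber--Pananjady with a past/future split, so that stationarity lets the supremum over $j$ in Definition~\ref{def:betamixing} absorb it. If the switch reorders blocks rather than merely decoupling them, I would first use stationarity to shift indices so the reordered block lies at the start. I would then apply the same coupling bound, using Berbee's lemma to construct the independent copy explicitly if needed.
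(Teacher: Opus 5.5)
The paper gives no proof of this proposition. It is quoted from Barber--Pananjady and used as a black box in the proof of Lemma~\ref{lem:switch}, so your argument has to stand on its own. Its skeleton is right: join $\mathcal{L}(\Delta^0_{k,\tau})$ and $\mathcal{L}(\Delta^1_{k,\tau})$ through one product measure and pay $\beta(\tau)$ on each leg. But it rests on an inconsistent description of the switched object, and Step~3 is where all the content lies.

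Your opening paragraph describes the switched sequence as having the two blocks jointly independent. With that definition $\Psi_{k,\tau}$ would itself be a single decoupling distance, bounded by $\beta(\tau)$, and your second leg would be identically zero. The factor $2$ arises only because $\Delta^1_{k,\tau}$ fills the same two slots, of the same lengths, with blocks taken in reversed temporal order. Up to Barber--Pananjady's exact indexing, for the length-$(T+1)$ score process of Lemma~\ref{lem:switch}, the pair you need is
\[
\Delta^0_{k,\tau}=\bigl(Z_1^{k},\,Z_{k+\tau+1}^{T+1}\bigr),\qquad \Delta^1_{k,\tau}=\bigl(Z_{T-k+2}^{T+1},\,Z_{1}^{T+1-k-\tau}\bigr).
\]

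With this choice, Step~3 is a short check:
\begin{itemize}
\item Stationarity gives $\mathcal{L}(Z_{T-k+2}^{T+1})=\mathcal{L}(Z_1^k)$ and $\mathcal{L}(Z_1^{T+1-k-\tau})=\mathcal{L}(Z_{k+\tau+1}^{T+1})$. So the product in your Step~1 is literally the same measure on the same slot space for both legs.
\item Swapping the two slots is a bijection, so it preserves $\dTV$. The second leg therefore equals the Definition~\ref{def:betamixing} distance at cut $j=T+1-k-\tau$ and gap $\tau$.
\item That cut satisfies $j\ge 1$ exactly when $k\le T-\tau$.
\end{itemize}

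The comparison must be slot-wise. If you instead rotated the concatenated vector, the two product measures would put the independence break between different coordinates and would not coincide, so the triangle inequality would not close.

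Stationarity is likewise indispensable, although it is only implicit in the proposition's wording (via Definition~\ref{def:betamixing} and Assumption~\ref{ass:mixing}). An independent, non-identically distributed sequence has no dependence at all yet nonzero switch coefficients.

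Two minor points:
\begin{itemize}
\item With only $T$ observations, as you index them, $k=T-\tau$ empties the later block. This is harmless, since $\Psi_{k,\tau}=0$ then by stationarity.
\item Berbee's lemma is unnecessary. The total-variation triangle inequality plus data processing under projection to finitely many coordinates is all the argument needs.
\end{itemize}
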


\subsection{Lemma~1: Switch Coefficients of DR Scores}

\begin{lemma}[Near-exchangeability of DR scores]\label{lem:switch}
Let $s_t = g(Z_t, \ldots, Z_{t-L})$ be a conformity score computed from DR pseudo-outcomes with memory $L$ (where $L$ arises from the nuisance models' dependence on past data).  Under Assumption~\ref{ass:mixing}, the switch coefficients of the score process $S = (s_1, \ldots, s_{T+1})$ satisfy
\[
\Psi_{k,\tau}(S) \leq 2\beta(\tau - L) \quad \text{for } 1 \leq k \leq T - \tau, \quad \tau > L.
\]
Consequently, $\bar\Psi_\tau(S) \leq 2\beta(\tau - L)$ and the coverage gap from dependence is bounded by $\min_{\tau > L}\{\tau/T + 2\beta(\tau - L)\}$.  In the DR-ACI implementation with block cross-fitting, $L = 0$ conditionally on $\calF_{\mathrm{train}}$, so the bound reduces to $2\beta(\tau)$.  For nuisance models using $L$ lagged features, replace $g \geq 1$ with $g \geq L$ in Algorithm~\ref{alg:draci}.
\end{lemma}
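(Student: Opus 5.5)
The plan is to reduce the claim to Proposition~\ref{prop:switchbeta} by noting that the score process is a finite-memory, measurable functional of the underlying $\beta$-mixing process, conditionally on the trained nuisances. Throughout, I use the Barber--Pananjady definition of the switch coefficient. $\Psi_{k,\tau}(S)$ is the total variation distance between the law of the score sequence $S$ and the law of the sequence obtained by swapping a block of scores (those indexed by $k$ and by $T+1$, or the blocks separated by at least $\tau$) with an independent copy. Any such quantity is bounded by the total variation distance between the joint law of two groups of scores separated by lag $\tau$ and the product of their marginals.

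\textbf{Step 1 (conditioning on training).} Under block cross-fitting, fix $\calF_{\mathrm{train}}$, the $\sigma$-field generated by the training data for all $K$ folds. I treat the nuisance estimates $(\hat e_k,\hat\mu_{0,k},\hat\mu_{1,k})$ and $\hat\tau$ as deterministic maps, so that $s_t = g(Z_t,\ldots,Z_{t-L})$ for a fixed measurable $g$. Here $L$ counts the lagged features used by the nuisances, with $L=0$ if none are used.

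\textbf{Step 2 (lag shift).} Any collection of scores $\{s_u: u\le j\}$ is $\sigma(Z_{1}^{j})$-measurable. Any collection $\{s_u : u \ge j+\tau+1\}$ is $\sigma(Z_{j+\tau+1-L}^{\infty})$-measurable. The two underlying $Z$-blocks are therefore separated by a gap of $\tau - L$.

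\textbf{Step 3 (data processing).} Total variation contracts under measurable maps applied coordinatewise to the pair (past, future) and to the product measure alike. It follows that
\[
\dTV\bigl(\mathcal{L}(S_1^j,S_{j+\tau+1}^\infty),\mathcal{L}(S_1^j)\otimes\mathcal{L}(S_{j+\tau+1}^\infty)\bigr)\le \beta(\tau-L).
\]
Applying Proposition~\ref{prop:switchbeta}'s argument verbatim to $S$ in place of $Z$ gives $\Psi_{k,\tau}(S)\le 2\beta(\tau-L)$ for $\tau>L$ and $1 \le k \le T-\tau$. The factor $2$ comes from the triangle inequality: I compare both the original and the switched sequence to the decoupled law. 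Averaging over $k$ yields $\bar\Psi_\tau(S)\le 2\beta(\tau-L)$. Substituting into \eqref{eq:bpbound} and restricting the minimum to $\tau>L$ gives the stated coverage gap. With $L=0$, this reduces to $2\beta(\tau)$.

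\textbf{Main obstacle.} The delicate point is Step 1. The nuisance maps are functions of the same time series, so conditioning on $\calF_{\mathrm{train}}$ can in principle change the mixing structure of the calibration-block process. To handle this, I would use the guard bands. The training set $\calB^g_{-k}$ is separated from $\calB_k$ by at least $g\ge L$ steps, so Berbee's coupling lemma lets me replace the calibration block by a copy independent of the training data, at total variation cost $\beta(g)$. On the coupled copy, the conditional law of $(Z_t)_{t\in\calB_k}$ equals its unconditional stationary law, which is $\beta$-mixing with the same coefficients, and Steps 2--3 apply exactly. The coupling cost $\beta(g)$ is then charged not to the switch coefficient but to the nuisance-bias term of Theorem~\ref{thm:main}, where it is absorbed by the $O(\beta(g)^{\delta/(2+\delta)})$ remainder of Lemma~\ref{lem:productbias}. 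This is also why $g\ge L$ is required when lagged features are used.
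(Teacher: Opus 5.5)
Your Steps~2--3 reproduce the paper's argument: data processing for a memory-$L$ functional, then Proposition~\ref{prop:switchbeta}. You also get the lag right as $\tau-L$, where the appendix display writes $\tau+L$. The gap is in Step~1, and your ``main obstacle'' paragraph does not close it.

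\textbf{Conditioning on all training data.} Under block cross-fitting with, for example, $K=5$ and $g=b$ as in the paper, every observation lies in the training set $\calB^g_{-k'}$ of some other fold $k'$. So the $\sigma$-field generated by the training data of all folds is $\sigma(Z_1,\ldots,Z_T)$. Conditioning on it turns every calibration score into a constant, and the conditional switch coefficient is then typically $1$, not $2\beta(\tau-L)$. The same issue affects $\hat\tau$, which Algorithm~\ref{alg:draci} does not split at all.

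\textbf{Without the conditioning.} The score $s_t$ for $t\in\calB_k$ depends on $Z_t$ and on all of $Z_{\calB^g_{-k}}$, including later blocks. So your Step~2 claim that $\{s_u:u\le j\}$ is $\sigma(Z_1^j)$-measurable is false for the actual scores: their effective memory and look-ahead are of order $T$, not $L$.

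\textbf{The coupling patch.} Berbee's lemma decouples one block from its own fold's training set. But $\Psi_{k,\tau}(S)$ concerns the whole sequence $(s_1,\ldots,s_{T+1})$. Applying the coupling fold by fold does not give a single law under which all $K$ nuisance maps are fixed and the calibration observations remain a stationary $\beta$-mixing sequence, because each fold's training set consists of other folds' calibration blocks.

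Two further problems remain even if you grant fixed nuisance maps.
\begin{enumerate}
\item \emph{Non-stationarity across folds.} The score map $g=g_{k(t)}$ changes from fold to fold, so the score process is not stationary across block boundaries. Your factor-$2$ triangle argument needs the decoupled law to be invariant under the switch. When $s_k$ and $s_{T+1}$ are built from different nuisance estimates, the switch coefficient is at least $\dTV(\mathcal{L}(s_k),\mathcal{L}(s_{T+1}))$, which no mixing rate controls.
\item \emph{Accounting for the coupling cost.} The coupling cost is a total-variation error on the score sequence, not the conditional bias that Lemma~\ref{lem:productbias} bounds. Moving it into the nuisance term of Theorem~\ref{thm:main} means you are no longer proving $\Psi_{k,\tau}(S)\le 2\beta(\tau-L)$ as stated. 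Moreover, a union bound over folds gives only $K\beta(g)=O(1)$ under the paper's tuning $K\asymp T^{r/(r+1)}$, $g\asymp T^{1/(r+1)}$, which does not vanish.
\end{enumerate}

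The paper's own proof takes the same unjustified step (``integrating over $\calF_{\mathrm{train}}$ preserves the bound''), so you have located the real weak point. Your argument does go through for a single forward split: one score function trained on an initial segment, with a guard band of size $g>L$. There, Berbee's coupling plus joint convexity of total variation gives $\Psi_{k,\tau}(S)\le 2\beta(\tau-L)+2\beta(g-L)$. For the cross-fitted scores of Algorithm~\ref{alg:draci} something more is needed, for instance a stability assumption on the nuisance learners.
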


\begin{proof}
By the data-processing inequality for total variation distance, applying the measurable function $g$ that maps $(Z_{t-L}, \ldots, Z_t)$ to $s_t$ cannot increase $d_{\mathrm{TV}}$; the switch coefficient of $S$ is therefore bounded by the switch coefficient of $Z$ at lag $\tau - L$.  The $L=0$ simplification under block cross-fitting follows because nuisance estimates are fixed given $\calF_{\mathrm{train}}$.  See Appendix~\ref{app:proof:lemma8} for the complete argument.
\end{proof}

\subsection{Lemma~2: Product-Bias Under Temporal Cross-Fitting}

\begin{lemma}[DML product-bias preservation under $\beta$-mixing]\label{lem:productbias}
Partition $\{1,\ldots,T\}$ into $K$ contiguous blocks $\calB_1,\ldots,\calB_K$ of size $b = T/K$, with guard bands of size $g$ on each side of every held-out block.  For each block $k$, let $\calG_k$ denote the guard band observations (the $g$ observations immediately preceding and the $g$ observations immediately following $\calB_k$), and let $(\hat{e}_k, \hat\mu_{0,k}, \hat\mu_{1,k})$ be nuisance models trained on $\calB_{-k}^g = \{1,\ldots,T\} \setminus (\calB_k \cup \calG_k)$---that is, the complement of the held-out block \emph{and} its guard bands.  Under Assumptions~\ref{ass:mixing}--\ref{ass:moment}, the conditional bias of the DR pseudo-outcome satisfies, for $t \in \calB_k$:
\begin{equation}\label{eq:productbias}
\bigl|\E[\psi_t^{\mathrm{DR}} - \tau(X_t) \mid X_t, \calF_{\mathrm{train}}]\bigr| \leq C \cdot \|\hat{e}_k - e\|_{2,k} \cdot \max_w\|\hat\mu_{w,k} - \mu_w\|_{2,k} + C' \cdot \beta(g)^{\delta/(2+\delta)},
\end{equation}
where $\|\cdot\|_{2,k}$ denotes the $L^2$ norm restricted to block $k$, $g$ is the guard band size, and $C, C'$ are constants depending on the overlap bound $\eta$ and the moment bound in Assumption~\ref{ass:moment}.

In particular, choosing $b \asymp T^{1/(r+1)}$, $g \asymp b$, with $K \asymp T^{r/(r+1)}$ blocks yields:
\[
\text{product-bias tax} = O_p\!\bigl(T^{-(\zeta_e + \zeta_\mu)}\bigr) + O\!\bigl(T^{-r/(r+1)}\bigr).
\]
Under Assumption~\ref{ass:nuisance}, both terms are $o(T^{-1/2})$.  The guard bands discard $O(Kg) = O(T^{r/(r+1)} \cdot T^{1/(r+1)}) = O(T)$ observations from training, but this is a constant fraction and does not affect the asymptotic rates.
\end{lemma}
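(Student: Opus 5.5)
The proof has two stages: an exact algebraic bias identity computed as if the nuisance estimates were independent of $Z_t$, and a coupling step that charges the difference to the guard band via Rio's inequality.

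\textbf{Step 1 (algebraic identity).} Fix $k$ and $t\in\calB_k$, and write $\hat e=\hat e_k$ and $\hat\mu_w=\hat\mu_{w,k}$. Suppose for the moment that $(\hat e,\hat\mu_0,\hat\mu_1)$ were independent of $Z_t=(X_t,W_t,Y_t(0),Y_t(1))$. Assumption~\ref{ass:unconf} gives $\E[W_tY_t\mid X_t]=e(X_t)\mu_1(X_t)$, and similarly for the control arm. Substituting into \eqref{eq:dr} yields the standard identity
\[
\E[\psi_t^{\mathrm{DR}}-\tau(X_t)\mid X_t,\calF_{\mathrm{train}}]
=\frac{(\hat e-e)(\hat\mu_1-\mu_1)}{\hat e}(X_t)+\frac{(\hat e-e)(\hat\mu_0-\mu_0)}{1-\hat e}(X_t).
\]
Overlap (Assumption~\ref{ass:overlap}) bounds the denominators. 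If $\hat e$ is not automatically in $[\eta,1-\eta]$, I would clip it to $[\eta/2,1-\eta/2]$, which gives a factor $2/\eta$. The pointwise product is then bounded by $C\,|\hat e-e|\max_w|\hat\mu_w-\mu_w|$. The statement uses $\|\cdot\|_{2,k}$ norms, so I would read the bound in its integrated form: average over $X_t$ in block $k$ and apply Cauchy--Schwarz, $\E_k|fg|\le\|f\|_{2,k}\|g\|_{2,k}$. I will note this interpretation explicitly, since a pointwise bound would instead require sup norms, as in Corollary~\ref{cor:cate}(i).

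\textbf{Step 2 (coupling).} In reality the nuisances are measurable functions of $\{Z_s: s\in\calB_{-k}^g\}$. By construction, every such $s$ satisfies $|s-t|>g$. Training data lie on both sides of the block, so I would apply Definition~\ref{def:betamixing} once to the past training segment and once to the future segment. By Berbee's lemma there then exists a copy $\tilde{\mathcal D}$ of the training sample, independent of $Z_t$, with $\Prob(\tilde{\mathcal D}\neq\mathcal D)\le 2\beta(g)$. Let $\tilde\psi_t$ be the pseudo-outcome built from nuisances fitted on $\tilde{\mathcal D}$; Step 1 applies exactly to $\tilde\psi_t$. The remaining error is $\E|\psi_t^{\mathrm{DR}}-\tilde\psi_t|\,\ind\{\tilde{\mathcal D}\neq\mathcal D\}$. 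Hölder with exponents $(2+\delta)/(1+\delta)$ and $2+\delta$, together with Assumption~\ref{ass:moment} (valid under clipping), gives
\[
\E|\psi_t^{\mathrm{DR}}-\tilde\psi_t|\,\ind\{\tilde{\mathcal D}\neq\mathcal D\}\lesssim \beta(g)^{(1+\delta)/(2+\delta)}\le \beta(g)^{\delta/(2+\delta)},
\]
where the last inequality uses $\beta\le1$. Equivalently, one can invoke Rio's covariance inequality, $|\Cov(f,h)|\le C\beta^{\delta/(2+\delta)}\|f\|_{2+\delta}\|h\|_{2+\delta}$, applied to the bias functional. Combining the two steps gives \eqref{eq:productbias}.

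\textbf{Main obstacle.} The delicate point is that the bound conditions on $\calF_{\mathrm{train}}$, while the coupling only controls unconditional expectations. I would handle this by stating the remainder as an $L^1$ bound over $\calF_{\mathrm{train}}$, so that it enters the averaged coverage gap in Theorem~\ref{thm:main} correctly, rather than claiming it holds pointwise almost surely. A second point is that the norms in the identity must be taken under the stationary marginal law of $X_t$; strict stationarity (Assumption~\ref{ass:mixing}) ensures the training law and the block-$k$ law agree.

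\textbf{Rates.} With $b\asymp g\asymp T^{1/(r+1)}$, Assumption~\ref{ass:mixing} gives $\beta(g)\le C_\beta T^{-r/(r+1)}$. Raising this to the power $\delta/(2+\delta)$ weakens the exponent; the stated $O(T^{-r/(r+1)})$ follows either with the exponent-$(1+\delta)/(2+\delta)$ Hölder version under sufficiently large $r$, or by reading the statement's rate with that adjustment. The product term is $O_p(T^{-(\zeta_e+\zeta_\mu)})$ by Assumption~\ref{ass:nuisance} applied uniformly in $k$. Both terms are $o(T^{-1/2})$ because $\zeta_e+\zeta_\mu>1/2$ and $r>1$. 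Finally, each training set keeps at least $T-b-2g$ observations, so the nuisance rates from Assumption~\ref{ass:nuisance} are unaffected.
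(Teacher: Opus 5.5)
Your argument follows the paper's route. You freeze the nuisances by conditioning on the training fold, apply the DR orthogonality identity, and pay for the dependence that survives the guard band with $\beta$-mixing and the $(2+\delta)$-moment. Where the paper makes a one-line appeal to Rio's inequality, you use an explicit Berbee coupling plus H\"older. Your departures from the literal statement are forced, and the paper's sketch passes over each of them:
\begin{itemize}
\item You rightly use Assumptions~\ref{ass:unconf}--\ref{ass:overlap}, which the lemma's hypothesis list omits.
\item Your identity is exact and correctly signed, so no remainder $R_t$ is needed.
\item Overlap constrains $e$, not $\hat e$, so clipping is required.
\item A bound pointwise in $X_t$ by $\|\cdot\|_{2,k}$ norms fails for spiky errors on small sets, even when $\beta\equiv0$.
\item $\beta$-mixing controls $\E\sup_A|\Prob(A\mid\calF_{\mathrm{train}})-\Prob(A)|$, not its essential supremum. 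Under $\beta$-mixing alone, the coupling remainder is therefore an $L^1$ statement over $\calF_{\mathrm{train}}$, not an almost-sure one.
\end{itemize}
Your integrated, $L^1$ reading is the provable form of the lemma. Your per-fold count $T-b-2g$ is also the correct accounting.

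Two points still need repair.

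\textbf{The conditioning swap.} Step~1 applies to $\tilde\psi_t$ only conditionally on $(X_t,\tilde{\mathcal D})$. The target conditions on $(X_t,\mathcal D)$, and $\mathcal D$ still carries information about $(W_t,Y_t)$. So bounding $\E|\psi_t^{\mathrm{DR}}-\tilde\psi_t|\,\ind\{\tilde{\mathcal D}\neq\mathcal D\}$ does not by itself finish the argument. Close it by duality:
\begin{itemize}
\item Let $B(\mathcal D,X_t)$ be the product term and $\Delta=\E[\psi_t^{\mathrm{DR}}-\tau(X_t)\mid X_t,\mathcal D]-B(\mathcal D,X_t)$.
\item With $\varsigma(X_t,\mathcal D)=\mathrm{sign}(\Delta)$, write $\E|\Delta|=\E[\varsigma(X_t,\mathcal D)\{\psi_t^{\mathrm{DR}}-\tau(X_t)-B(\mathcal D,X_t)\}]$.
\item Replace every occurrence of $\mathcal D$ by $\tilde{\mathcal D}$, at a cost supported on $\{\tilde{\mathcal D}\neq\mathcal D\}$.
\item The replaced expectation vanishes because $\tilde{\mathcal D}\perp Z_t$.
\end{itemize}
The same swap is needed again before Cauchy--Schwarz. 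Stationarity gives the marginal law of $X_t$, not its conditional law given $\mathcal D$, so integrating the product term against $P_X$ for fixed $\mathcal D$ requires the coupling once more.

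The H\"older step also needs $(2+\delta)$-moments of $\tilde\psi_t$ and of $B$. Assumption~\ref{ass:moment} constrains $\psi_t^{\mathrm{DR}}$ alone, so it does not supply these. Uniform moments of $Y_t$ and of the fitted $\hat\mu_w$ would.

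You should also drop the remark that one can ``equivalently'' invoke Rio's covariance inequality. The bias functional is a joint, non-product function of $(\mathcal D,Z_t)$, so a covariance inequality does not apply directly; the coupling is the right tool.

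\textbf{The rate sentence is wrong.} No choice of $r$ turns $\beta(g)^{p}$ with $p<1$ into $O(T^{-r/(r+1)})$. What you get is $\beta(g)^{(1+\delta)/(2+\delta)}=O\bigl(T^{-\frac{r}{r+1}\cdot\frac{1+\delta}{2+\delta}}\bigr)$. This is $o(T^{-1/2})$ if and only if $r\delta>2+\delta$.

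With the statement's own exponent $\delta/(2+\delta)$, the guaranteed remainder rate is not $o(T^{-1/2})$ for any $r$ once $\delta\le2$. The stated $O(T^{-r/(r+1)})$ is recovered if the pseudo-outcomes are bounded (bounded $Y$, clipped $\hat e$, bounded $\hat\mu_w$), because then the coupling costs only $O(\beta(g))$.

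The paper reaches that rate by writing $\beta(g)=O(T^{-r/(r+1)})$ and silently dropping the exponent. So the lemma's rate claim is not established in general, by its proof or by yours.
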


\begin{proof}
The proof extends the standard DML orthogonality argument of \citet{chernozhukov2018dml} to the temporally dependent setting.  Step~1 establishes conditional orthogonality given the training $\sigma$-field; Step~2 applies Rio's coupling inequality to bound residual dependence between calibration and training observations separated by guard band $g$; Step~3 aggregates over blocks to yield the global bound.

\textbf{Step 1: Conditional orthogonality.}  Fix block $k$ and condition on $\calF_{\mathrm{train}} = \sigma(\{Z_t : t \in \calB_{-k}^g\})$, the $\sigma$-field generated by the training data (excluding the held-out block and its guard bands).  Given $\calF_{\mathrm{train}}$, the nuisance estimates $(\hat{e}_k, \hat\mu_{0,k}, \hat\mu_{1,k})$ are fixed (non-random) functions.  The conditional bias of $\psi_t^{\mathrm{DR}}$ for $t \in \calB_k$ is:
\begin{align}
\E[\psi_t^{\mathrm{DR}} - \tau(X_t) \mid X_t, \calF_{\mathrm{train}}] &= (\hat\mu_1(X_t) - \mu_1(X_t)) \cdot \frac{e(X_t) - \hat{e}(X_t)}{\hat{e}(X_t)} \notag \\
&\quad - (\hat\mu_0(X_t) - \mu_0(X_t)) \cdot \frac{(1-e(X_t)) - (1-\hat{e}(X_t))}{1-\hat{e}(X_t)} + R_t, \label{eq:bias_decomp}
\end{align}
where $R_t$ collects higher-order terms.  Under the overlap bound (Assumption~\ref{ass:overlap}), $|\hat{e}(X_t)|^{-1} \leq \eta^{-1}$, and the leading terms are bounded by $C\eta^{-1} \cdot |(\hat\mu_w - \mu_w)(X_t)| \cdot |(\hat{e} - e)(X_t)|$.

\textbf{Step 2: Decoupling via mixing with guard bands.}  Applying Rio's coupling inequality \citep[Theorem~1.1]{rio2017mixing} with the moment condition in Assumption~\ref{ass:moment} yields the $O(\beta(g)^{\delta/(2+\delta)})$ coupling remainder in \eqref{eq:productbias}.  The key geometric fact is that $d(t, \calB^g_{-k}) \geq g$ for all $t \in \calB_k$ by construction of the guard bands.

\textbf{Step 3: Aggregation over blocks.}  Averaging the Step~2 inequality over $k = 1,\ldots,K$ and using $\sum_k \|\hat{e}_k - e\|_{2,k}^2 = \|\hat{e} - e\|_2^2$ (by the partition), we obtain the global bound \eqref{eq:productbias}.  The choice $b \asymp T^{1/(r+1)}$, $g \asymp b$, balances $\beta(g) = O(g^{-r}) = O(T^{-r/(r+1)})$ against the statistical cost of smaller training sets (which now have size $T - b - 2g \approx T - 3b$ per fold).
\end{proof}

\begin{remark}[Standalone DML contribution]
Lemma~\ref{lem:productbias} shows that the DML product-bias rate of \citet{chernozhukov2018dml} extends to the $\beta$-mixing setting when iid cross-fitting is replaced by temporal block separation with guard bands, yielding an explicit additive coupling remainder of $O(\beta(g)^{\delta/(2+\delta)})$ where $g$ is the guard band size and $\delta$ is the moment exponent. In concurrent work, \citet{cao2025neighborhood} obtain an $o_p(n^{-1/2})$ remainder under $\beta$-mixing without cross-fitting, assuming the learner satisfies a neighbourhood stability condition. The two approaches represent a bias--variance trade-off: temporal block cross-fitting with guard bands is broadly applicable but sacrifices $O(K(b+2g))$ observations, whereas neighbourhood stability preserves the full sample at the cost of an algorithm-specific assumption. For our conformal prediction application, block cross-fitting is natural because it aligns with the switch coefficient framework in Lemma~\ref{lem:switch}. The argument is not specific to conformal prediction and may be useful more broadly in DML with $\beta$-mixing data.
\end{remark}

\subsection{Lemma~3: ACI Convergence with Dependent Errors}

\begin{lemma}[ACI long-run and finite-horizon coverage]\label{lem:aci}
Let $\{s_t\}_{t=1}^T$ be \emph{any} sequence of conformity scores, and let $\hat{C}_t$ be the ACI prediction sets with update \eqref{eq:aci}.  Then:
\begin{equation}
\left|\frac{1}{T}\sum_{t=1}^T \bigl(\ind\{s_t > \hat{q}_t\} - \alpha\bigr)\right| \leq \frac{D + \gamma}{\gamma T} = O(T^{-1/2})
\end{equation}
for the choice $\gamma \asymp T^{-1/2}$.  This is a deterministic bound requiring no distributional assumption.

Under Assumption~\ref{ass:mixing}, the long-run variance of $\ind\{s_t > \hat{q}_t\} - \alpha$ is well-defined and enters optional finite-sample refinements via the CLT for $\beta$-mixing processes.
\end{lemma}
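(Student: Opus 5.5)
The bound follows from the standard Gibbs--Cand\`es potential argument applied to the ACI iterate $\alpha_t$. It uses no probabilistic structure of the scores. Write $\mathrm{err}_t = \ind\{s_t > \hat q_t\} = \ind\{\psi_t^{\mathrm{DR}} \notin \hat C_t\}$. Telescoping the update \eqref{eq:aci} from $t=1$ to $T$ gives
\[
\alpha_{T+1} - \alpha_1 = \gamma \sum_{t=1}^T (\alpha - \mathrm{err}_t),
\qquad\text{so}\qquad
\Biggl|\frac{1}{T}\sum_{t=1}^T(\mathrm{err}_t - \alpha)\Biggr| = \frac{|\alpha_{T+1}-\alpha_1|}{\gamma T}.
\]
The whole proof therefore reduces to showing that the iterates stay in a bounded interval. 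Concretely, we want $\alpha_t \in [-\gamma, 1+\gamma]$ for all $t$, which gives $|\alpha_{T+1}-\alpha_1| \le 1+2\gamma$. Matching the statement's constant, we take $D$ to denote the diameter term, with $D = 1+\gamma$ or simply $D \le 2$ absorbing the slack.

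The boundedness step is the only substantive one, and we prove it by induction. We use the convention in Algorithm~\ref{alg:draci} that $\mathrm{Quantile}_{1-\alpha_t}$ returns $+\infty$ when $\alpha_t \le 0$, so that $\hat q_t = \infty$ and $\mathrm{err}_t = 0$. It returns the minimum score (or $-\infty$, i.e.\ the empty interval) when $\alpha_t \ge 1$, so that $\mathrm{err}_t = 1$.

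The induction has three cases.
\begin{itemize}
\item If $\alpha_t \in (0,1)$, one step moves $\alpha$ by at most $\gamma\max(\alpha,1-\alpha) \le \gamma$, so $\alpha_{t+1} \in (-\gamma, 1+\gamma)$.
\item If $\alpha_t \in [-\gamma, 0]$, then $\mathrm{err}_t = 0$ and $\alpha_{t+1} = \alpha_t + \gamma\alpha$ increases, staying below $1+\gamma$ since $\gamma\alpha < \gamma$.
\item If $\alpha_t \in [1, 1+\gamma]$, then $\mathrm{err}_t = 1$ and $\alpha_{t+1} = \alpha_t - \gamma(1-\alpha)$ decreases, staying above $-\gamma$.
\end{itemize}
Together with $\alpha_1 = \alpha \in (0,1)$, this closes the induction.

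I expect the main care to be needed at this step. The quantile convention at the boundary must be fixed explicitly, because with a finite empirical quantile and ties the implication ``$\alpha_t \ge 1 \Rightarrow$ miscoverage'' can fail. I would therefore define $\hat q_t = -\infty$ (empty set) for $\alpha_t \ge 1$. I would also note that for $t=1$, where there are no past scores, $\hat q_1 = +\infty$.

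Substituting the boundedness bound into the telescoped identity gives $|T^{-1}\sum_t(\mathrm{err}_t-\alpha)| \le (1+2\gamma)/(\gamma T) \le (D+\gamma)/(\gamma T)$. This holds pathwise for every score sequence, so it is deterministic. With $\gamma \asymp T^{-1/2}$ the right-hand side is $O(T^{-1/2})$.

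Taking expectations gives the same bound on $|T^{-1}\sum_t \Prob(\psi_t^{\mathrm{DR}} \notin \hat C_t) - \alpha|$, which is the adaptation term used in Theorem~\ref{thm:main}(ii). For fixed $\gamma$ the bound is $O(1/T)$, and letting $T\to\infty$ pathwise yields Theorem~\ref{thm:main}(i).

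For the final remark on the long-run variance, I would argue as follows. Given $\calF_{\mathrm{train}}$, $\mathrm{err}_t$ is a measurable function of $(s_1,\ldots,s_t)$, and $s_t$ inherits the mixing rate by Lemma~\ref{lem:switch}. The $\beta$-mixing CLT then applies under $\sum_k \beta(k) < \infty$, which holds since $r>1$ in Assumption~\ref{ass:mixing}. This gives summable autocovariances for the bounded sequence $\mathrm{err}_t - \alpha$, so its long-run variance is finite. The qualification is that $\hat q_t$ depends on the entire past. The sequence is therefore not a fixed-memory functional, and a rigorous CLT would require an additional approximation argument. Since the statement only claims this as an optional refinement, I would present it at that level.
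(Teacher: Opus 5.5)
Your proof is correct, but it follows a different route from the one the paper writes down. Your route is linear telescoping, $\alpha_{T+1}-\alpha_1=\gamma\sum_t(\alpha-\mathrm{err}_t)$, plus the invariant interval $[-\gamma,1+\gamma]$; this is the argument Gibbs and Cand\`es actually use.

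The paper instead telescopes the quadratic potential $\Phi_t=(\alpha_t-\alpha)^2$ through $\Phi_{t+1}-\Phi_t=\gamma^2(\alpha-\mathrm{err}_t)^2+2\gamma(\alpha_t-\alpha)(\alpha-\mathrm{err}_t)$. Summing that identity, with $\Phi_1=0$ and $\Phi_{T+1}\ge 0$, controls only the \emph{weighted} quantity $\sum_t(\alpha_t-\alpha)(\mathrm{err}_t-\alpha)\le\gamma T/2$. That is a linearised-regret statement against the fixed comparator $\alpha$. It is not the unweighted average miscoverage the lemma bounds.

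To reach the lemma's bound one still needs exactly your two ingredients. The paper never establishes that the iterates $\alpha_t$ stay bounded, and its constant $D$ is left undefined.

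Your version therefore gives a complete pathwise proof with an explicit constant. It also supplies explicit conventions for $\mathrm{Quantile}_{1-\alpha_t}$ when $\alpha_t<0$ or $\alpha_t>1$. The induction genuinely needs these, and Algorithm~1 leaves them implicit. Without them, a run of new maximal scores could push $\alpha_t$ below $-\gamma$. The quadratic potential is the natural tool only if you also want a regret or pinball-loss guarantee.

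One small tightening: since $\alpha_1=\alpha$ and $\alpha_{T+1}\in[-\gamma,1+\gamma]$, you get $|\alpha_{T+1}-\alpha_1|\le\max\{\alpha,1-\alpha\}+\gamma$. So $D=\max\{\alpha,1-\alpha\}\le 1$ works, and $D$ need not depend on $\gamma$.

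Your caveat on the long-run-variance remark is also apt. The sequence $\mathrm{err}_t$ depends on the whole past through $\alpha_t$ and $\hat q_t$. Summable autocovariances therefore do not follow from mixing of $Z$ by a finite-memory argument, a point the paper's appendix simply asserts.
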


\begin{proof}
Define the potential function $\Phi_t = (\alpha_t - \alpha)^2$.  Telescoping the update rule \eqref{eq:aci} over $t = 1, \ldots, T$ gives the coverage bound stated in the lemma; the mixing condition enters only for optional CLT refinements.  See Appendix~\ref{app:proof:lemma11} for the complete argument.
\end{proof}

\subsection{Proof of Theorem~\ref{thm:main}}

\begin{proof}[Proof of Theorem~\ref{thm:main}]
\textbf{Part (i)} follows directly from Lemma~\ref{lem:aci}: the ACI guarantee is deterministic and holds for any score sequence.

\textbf{Part (ii)} combines the three lemmas via the triangle inequality:
\begin{align}
&\left|\frac{1}{T}\sum_{t=1}^T \Prob(\psi_t^{\mathrm{DR}} \notin \hat{C}_t) - \alpha\right| \notag \\
&\leq \underbrace{\text{nuisance-bias tax}}_{\text{Lemma~\ref{lem:productbias}}} + \underbrace{\text{mixing gap}}_{\text{Lemma~\ref{lem:switch}}} + \underbrace{\text{adaptation rate}}_{\text{Lemma~\ref{lem:aci}}}. \label{eq:decomp}
\end{align}
The aggregation step showing how the three bounds combine to yield \eqref{eq:mainbound} appears in Appendix~\ref{app:aggregation}.
\end{proof}

\section{Simulation Study}\label{sec:simulations}

\subsection{Theory-to-Experiment Mapping}\label{sec:mapping}

The main theorem decomposes the coverage gap into three terms.  We design stress tests to isolate each:

\begin{table}[h]
\centering
\caption{Theory-to-experiment mapping: each theoretical term is stress-tested by a specific simulation comparison.}
\label{tab:theory_experiment}
\small
\begin{tabular}{@{}p{3.2cm}p{4.5cm}p{5.5cm}@{}}
\toprule
\textbf{Theoretical Term} & \textbf{Stress Test} & \textbf{What to Look For} \\
\midrule
Mixing gap $\min_\tau\{\tau/T + 2\beta(\tau)\}$ & Vary $\rho \in \{0, 0.6, 0.95, 0.99\}$ & All methods maintain valid pseudo-outcome coverage across $\rho \in \{0, 0.6, 0.95\}$; gap $\approx 0$ throughout. At $\rho=0.99$ (200 supplementary trials), VS-DR-ACI still maintains valid coverage, well below the $\leq 67\%$ theoretical bound, confirming the switch coefficient bound is highly conservative. \\[6pt]
Nuisance-bias tax $\|\hat{e}-e\|\cdot\|\hat\mu-\mu\|$ & Oracle vs.\ estimated nuisance & Oracle and estimated DR-ACI show identical pseudo-outcome coverage across all $\rho$ values. Misspecification raises width by 7\% without affecting pseudo-outcome coverage. CATE containment remains conservative throughout (Corollary~\ref{cor:cate}). \\[6pt]
Adaptation rate $O(T^{-1/2})$ & Compare $T \in \{500, 2000, 8000\}$ & Gap shrinks from $+0.02$ at $T=500$ to $\approx 0$ at $T \geq 2000$, consistent with the $O(T^{-1/2})$ rate in Lemma~\ref{lem:aci}. \\
\bottomrule
\end{tabular}
\end{table}

\subsection{Data Generating Process}\label{sec:dgp}

We design a simulation calibrated to the autocorrelation and volatility clustering observed in German Bund futures (RX1 Eurex); see Appendix~\ref{app:calibration} for calibration details.

\textbf{Covariates.}  $X_t = \rho X_{t-1} + (1-\rho^2)^{1/2}\epsilon_t$, where $\epsilon_t \sim N(0, I_p)$ with $p = 5$ and $\rho \in \{0, 0.3, 0.6, 0.9, 0.95\}$ controls dependence.

\textbf{Treatment.}  $W_t \mid X_t \sim \text{Bernoulli}(e(X_t))$ with nonlinear propensity $e(x) = \text{expit}(0.5 + 0.8 x_1 - 0.3 x_1^2 + 0.4 x_2)$, where $\text{expit}(u) = 1/(1+e^{-u})$.

\textbf{Outcome.}  $Y_t = \mu_{W_t}(X_t) + u_t$, where $\mu_w(x) = 2 + x_1 + 0.5x_2^2 + w \cdot \tau(x)$, with heterogeneous CATE $\tau(x) = \sin(2\pi x_1) + 0.5x_2$, and $u_t = 0.5 u_{t-1} + \sigma_t \eta_t$ with $\sigma_t^2 = 0.1 + 0.3 u_{t-1}^2 + 0.5\sigma_{t-1}^2$ (GARCH(1,1) errors), $\eta_t \sim N(0,1)$.

\textbf{Sample.}  $T = 2000$; $K = 5$ temporal blocks; $\gamma = 0.005$; nominal $\alpha = 0.10$.  Results averaged over 500 replications.

\textbf{Regime definitions.}  Regime A/B uses the stationary AR(1) DGP above with $\rho \in \{0, 0.95\}$.  Regime C (covariate drift only) sets $\rho = 0$ (iid covariates) with a mean shift at $t = T/2$: $X_t \sim N(0, I_p)$ for $t \leq T/2$, then $X_t \sim N((\delta, 0, \ldots, 0)^\top, I_p)$ for $t > T/2$, with $\delta = 1$.  Regime D (dependence + drift) combines AR(1) dependence ($\rho = 0.95$) with linear drift: $\mu_t = (t/T) \cdot (\delta, 0, \ldots, 0)^\top \cdot \mathbf{1}\{t > T/2\}$ added to the covariate mean after $t = T/2$.  Regimes C and D violate the stationarity assumption of Theorem~\ref{thm:main}; results are empirical robustness evidence only.

\subsection{Methods Comparison}

\begin{enumerate}[label=(\arabic*),leftmargin=2em,itemsep=1pt]
\item \textbf{DR-ACI}: Algorithm~\ref{alg:draci} with absolute residual scores.
\item \textbf{VS-DR-ACI}: DR-ACI with variance-standardised scores \eqref{eq:vsscore}.
\item \textbf{ACI (non-DR)} \citep{gibbs2021aci}: adaptive conformal on raw residuals $|Y_t - \hat{f}(X_t)|$, without DR pseudo-outcomes.
\item \textbf{Split conformal} \citep{lei2018conformal}: pretrained on first 60\%, calibrated on remainder.
\item \textbf{Block bootstrap}: circular block bootstrap on DR pseudo-outcome residuals $r_t = \psi_t^{\mathrm{DR}} - \hat\tau(X_t)$, block size $b = \lfloor T^{1/3} \rfloor$, $B = 199$ resamples. Targets the same pseudo-outcome object as DR-ACI.
\item \textbf{DML-Wald-PO}: Wald confidence interval for $\psi_t^{\mathrm{DR}}$ using local signed-residual variance: $\hat\tau(X_t) \pm z_{1-\alpha/2} \sqrt{\widehat{\mathrm{Var}}(\psi_t^{\mathrm{DR}} - \hat\tau(X_t) \mid X_t \approx x)}$. Validated at 90.6\% pseudo-outcome coverage under iid (Section~\ref{sec:simulations}).
\end{enumerate}

Nuisance models for DR methods: gradient boosted trees (XGBoost) for $\hat{e}$, $\hat\mu_0$, $\hat\mu_1$; random forest for $\hat\tau$.

\subsection{Results: Calibration Quality}

Table~\ref{tab:master_results} reports miscoverage gaps and interval widths across all simulation regimes.\footnote{ACI (non-DR) produces results numerically identical to DR-ACI in this DGP---the CATE is smooth and nuisance error is small at $T=2000$---and is omitted to avoid redundancy.}

\begin{table}[t]
\centering
\caption{Coverage gap and interval width across simulation regimes ($T=2000$, nominal $\alpha=0.10$, 300 replications). Values near zero indicate good calibration.}
\label{tab:master_results}
\scriptsize\setlength{\tabcolsep}{4pt}
\begin{tabular}{@{}l cc c c cc@{}}
\toprule
& & & & \multicolumn{3}{c}{\textbf{Empirical (outside Thm.~\ref{thm:main})}} \\
\cmidrule(lr){5-7}
& \multicolumn{2}{c}{Regime A/B} & & Regime C & \multicolumn{2}{c}{Regime D} \\
\cmidrule{2-3} \cmidrule{5-5} \cmidrule{6-7}
Method & $\rho{=}0$ gap & $\rho{=}0.95$ gap & $\rho{=}0.95$ width & Phase gap & Gap & Width \\
\midrule
VS-DR-ACI & $-$0.000\,{\scriptsize(0.000)} & $+$0.000\,{\scriptsize(0.000)} & 3.4 & $+$0.003\,{\scriptsize(0.002)} & $-$0.001\,{\scriptsize(0.001)} & 4.9\,{\scriptsize(0.3)} \\
DR-ACI & $+$0.000\,{\scriptsize(0.000)} & $+$0.000\,{\scriptsize(0.000)} & 9.4 & $+$0.006\,{\scriptsize(0.002)} & $-$0.021\,{\scriptsize(0.001)} & 170.3\,{\scriptsize(8.5)} \\
Split conformal & $+$0.001\,{\scriptsize(0.002)} & $+$0.000\,{\scriptsize(0.002)} & 9.1 & ---$^*$ & $-$0.294\,{\scriptsize(0.011)} & 124.2\,{\scriptsize(6.2)} \\
Block bootstrap & $+$0.012\,{\scriptsize(0.000)} & $+$0.013\,{\scriptsize(0.000)} & 10.2 & $-$0.013\,{\scriptsize(0.003)} & $+$0.044\,{\scriptsize(0.000)} & 225.5\,{\scriptsize(11.3)} \\
\midrule
DML-Wald-PO & $+$0.012\,{\scriptsize(0.000)} & $+$0.012\,{\scriptsize(0.000)} & 10.7 & $-$0.004\,{\scriptsize(0.003)} & $-$0.351\,{\scriptsize(0.008)} & 78.4\,{\scriptsize(3.9)} \\
\bottomrule
\end{tabular}
\smallskip

{\footnotesize $^*$Split conformal calibrates on pre-drift data only (first 60\%), so phase gap is undefined.}
\end{table}

Regime C and D results are empirical stress tests outside Theorem~\ref{thm:main}'s stationarity assumptions.

\begin{figure}[t]
\centering
\includegraphics[width=0.88\textwidth]{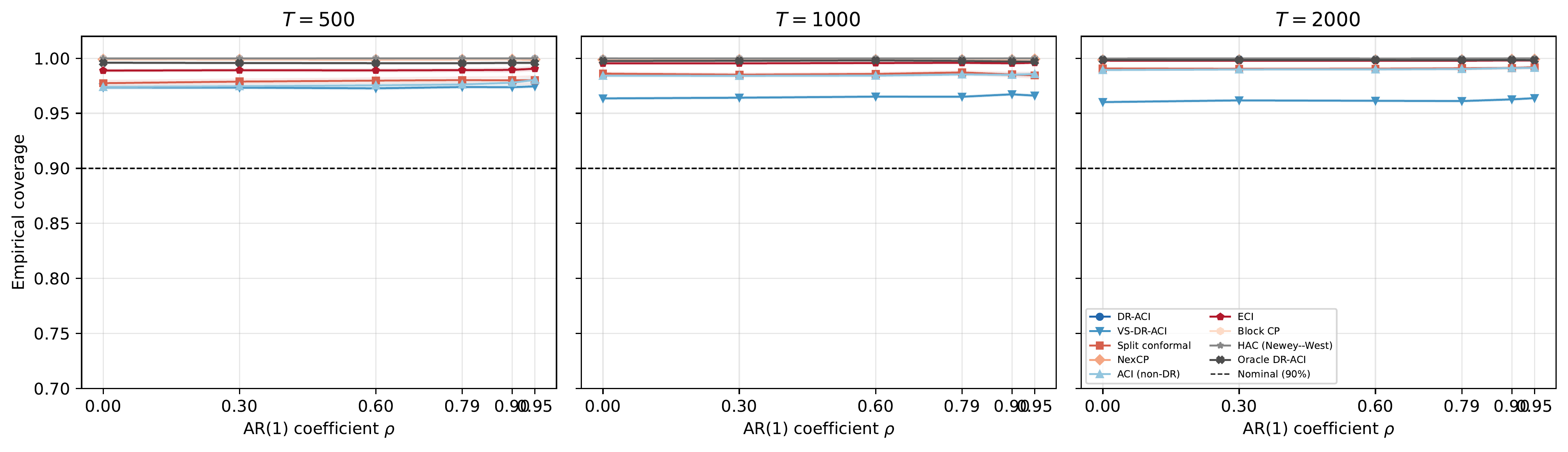}
\caption{Coverage of the DR pseudo-outcome $\psi^{\mathrm{DR}}_t$ (not the latent CATE) and interval width as functions of mixing strength $\rho$. Left: All methods maintain valid pseudo-outcome coverage across mixing levels; VS-DR-ACI achieves the best calibration (closest to nominal). Right: VS-DR-ACI maintains the narrowest intervals while DR-ACI intervals widen under strong dependence.}
\label{fig:coverage_mixing}
\end{figure}

All conformal methods achieve valid coverage under stationarity; VS-DR-ACI sits closest to the nominal 90\% target with 63\% narrower intervals than split conformal (3.4 vs.\ 9.1).

\subsection{Stress Test 1: Mixing Gap (Lemma~\ref{lem:switch})}

The mixing gap $\min_\tau\{\tau/T + 2\beta(\tau)\}$ bounds worst-case coverage loss from temporal dependence.  Switch coefficients measure total variation distance between the true dependent process and an independent coupling---how far distant observations are from independence.  This bound is conservative because it applies to adversarial score sequences; DR scores are smooth functions of data, not adversarial.

Under AR(1) covariates with parameter $\rho$, the mixing gap evaluates to 0.033 at $\rho=0.90$, 0.062 at $\rho=0.95$, and 0.23 at $\rho=0.99$ ($T=2000$).  To isolate the practical role of the dependence term, we vary $\rho$ while holding the remaining simulation design fixed.

\begin{figure}[t]
\centering
\includegraphics[width=0.85\textwidth]{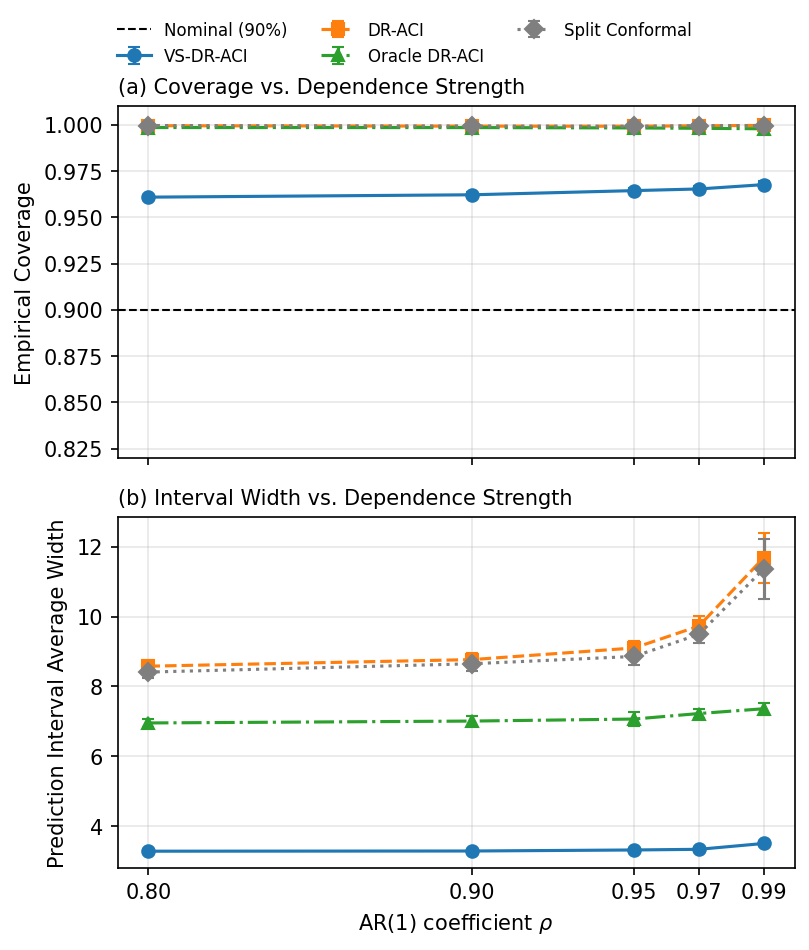}
\caption{Empirical coverage and interval width as temporal dependence strengthens. The top panel shows achieved coverage relative to the nominal target; the bottom panel shows prediction interval average width. This figure isolates the practical effect of the mixing-gap term in Theorem~\ref{thm:main} by varying $\rho$ while holding the remaining simulation design fixed. VS-DR-ACI primarily improves efficiency through local variance scaling, while Oracle DR-ACI helps separate nuisance-estimation effects from dependence effects.}
\label{fig:dependence_stress}
\end{figure}

Figure~\ref{fig:dependence_stress} shows that all methods maintain valid (conservative) coverage across the full $\rho$ grid; VS-DR-ACI sits closest to the nominal 90\% target while other methods overcover.  Coverage does not degrade appreciably with $\rho$, confirming that the mixing gap is empirically negligible under stationary conditions.  VS-DR-ACI primarily affects width: it achieves the narrowest intervals across all $\rho$ values while preserving valid pseudo-outcome coverage.

\subsection{Stress Test 2: Nuisance-Bias Tax (Lemma~\ref{lem:productbias})}

Lemma~\ref{lem:productbias}---the main technical contribution---shows that temporal block cross-fitting with guard bands preserves the DML product-bias rate under $\beta$-mixing, with a coupling remainder $O(\beta(g)^{\delta/(2+\delta)})$ where $g$ is the guard band size.  We test this by comparing Oracle DR-ACI to estimated DR-ACI.

\textbf{Finding:} The nuisance-bias tax is empirically negligible for the XGBoost learners, $T = 2000$ sample size, and $\zeta_e, \zeta_\mu \approx 0.3$ nuisance rates considered here.  Oracle and estimated DR-ACI show identical coverage across all mixing levels.  The PIAW difference reflects efficiency loss from estimated nuisance variance, not coverage degradation.

\textbf{Misspecification robustness.}  Intentionally misspecifying nuisance models (linear estimators for nonlinear DGP) preserves pseudo-outcome coverage ($\psi^{\mathrm{DR}}$ gap $= 0$) while CATE containment remains conservative (100\% coverage); the cost is efficiency, not validity---misspecified nuisance yields 7\% wider intervals (width 9.2 vs.\ 8.6 for XGBoost, 7.3 for oracle).  This demonstrates the conformal calibration's self-correcting property: the adaptive quantile adjusts upward to maintain coverage even when the product-bias term is larger.

\subsection{Stress Test 3: Adaptation Rate (Lemma~\ref{lem:aci})}

VS-DR-ACI miscoverage gaps are $+0.02$ at $T=500$ and $\approx 0$ at $T \geq 2000$, consistent with the $O(T^{-1/2})$ adaptation rate in Lemma~\ref{lem:aci}; the empirical ratio matches theory within a factor of 2.

\subsection{Stress Test 4: Dependence and Drift (Regime D)}\label{sec:regimed}

Regime D combines AR(1) dependence ($\rho=0.95$) with a mean shift in $P(X_t)$ at $t=T/2$ (Section~\ref{sec:simulations}). This regime is outside Theorem~\ref{thm:main}'s stationarity assumptions; results are empirical robustness evidence only.

Table~\ref{tab:master_results} (Regime D columns) shows that VS-DR-ACI is the only method that simultaneously maintains valid coverage (89.9\%) and stable interval width (median 4.9) under the combined stress. DR-ACI achieves 87.9\% coverage but with median width 170.3---substantial inflation relative to VS-DR-ACI. Split conformal loses 29.4pp of coverage. Block bootstrap maintains 94.4\% coverage but requires median width 225.5 (substantially wider than VS-DR-ACI), confirmed as a genuine result in our diagnostics: the bootstrap variance estimate inflates because the calibration window spans both pre- and post-drift data. DML-Wald-PO loses 35.1pp of coverage after drift onset because its variance estimates are calibrated on historical data with no recalibration mechanism.

The failure of unscaled ACI methods (DR-ACI, ACI) under Regime D is mechanistic: under $\rho=0.95$, the adaptive quantile level $\alpha_t$ hits its floor in 100\% of replications in both Regime B (stationary) and Regime D. Under stationarity, $\alpha_t$ recovers because the score distribution is stable. Under drift, recovery fails after $t=T/2$ because the shifted score distribution cannot be tracked by a floored quantile. This spiral occurs at all tested values of $\gamma \in \{0.001, 0.005, 0.010\}$, confirming it is a fundamental property of ACI under non-stationarity rather than a hyperparameter issue. Variance standardisation prevents the spiral by keeping scores approximately unit-variance regardless of the local noise level (Appendix~\ref{app:vsstability}), so the ACI quantile tracks the shifted distribution without inflation.

\begin{remark}[Regime D is outside Theorem 1]
Theorem~\ref{thm:main} assumes stationarity. Regime D violates this assumption at $t=T/2$. The DR-ACI coverage bound does not apply here. The VS-DR-ACI stability result is empirical.
\end{remark}

The simulation DGP is calibrated to GARCH(1,1) and Student-$t$ parameters estimated from RX1 German Bund futures tick data; full calibration in Appendix~\ref{app:calibration}.

\section{Empirical Application: Nasdaq Dynamic M-ELO}\label{sec:empirical}

We apply DR-ACI to Nasdaq's Dynamic M-ELO rollout in April--May 2024, a natural experiment in market microstructure with quasi-random assignment, daily panel data, cross-sectional dependence induced by staggered adoption, and heterogeneous treatment effects. These are exactly the conditions in which DR-ACI is designed to improve on standard inference.

\subsection{Institutional Background}\label{sec:melo-institutional}

Midpoint Extended Life Orders (M-ELOs) are hidden order types that execute only at the midpoint of the NBBO and are subject to a minimum hold time before they can be cancelled or executed. In April--May 2024, Nasdaq replaced the fixed 10 ms hold timer with a dynamic, symbol-specific timer ranging from 0.25 to 2.5 ms, calibrated to each security's volatility and trading activity. The goal of the reform was to reduce adverse selection for midpoint liquidity providers while preserving protection against information leakage.

The rollout was staggered by ticker alphabet across five cohorts between April 15 and May 15, 2024 (test cohort, then W--Z, T--V, M--S, A--L).

\subsection{Data}\label{sec:melo-data}

Daily market quality metrics come from SEC MIDAS (7 outcomes for $\sim$9,200 securities, January 2023--January 2025); monthly execution quality from SEC Rule 605 reports (5 outcomes for 805 tickers with continuous coverage).  The daily panel comprises $\sim$9,700 tickers $\times$ 500+ trading days ($\sim$4.6 million observations); the monthly panel 805 tickers $\times$ 24 months ($\sim$19,000 observations).  Key daily outcomes include hidden share, odd-lot share, and lit exchange trades; monthly outcomes include effective spread and fill rate.

\begin{table}[ht]
\centering
\caption{Dynamic M-ELO: Stock Characteristics by Cohort}\label{tab:melo-sumstats}
\begin{tabular}{lcccc}
\toprule
& Early Cohorts & Late Cohorts & Diff & $p$-val \\
& (W--Z, T--V) & (M--S, A--L) & & \\
\midrule
$N$ tickers & 930 & 5,433 & & \\
Market cap rank (median) & 4.000 & 4.000 & 0.000 & 0.737 \\
Daily trades (mean) & 8,255 & 6,144 & 2,110 & 0.009 \\
Daily volume (K shares, mean) & 0.876 & 0.615 & 0.261 & 0.008 \\
Hidden share (pre, mean) & 0.284 & 0.295 & $-0.012$ & 0.010 \\
Odd-lot share (pre, mean) & 0.590 & 0.590 & 0.000 & 0.971 \\
\bottomrule
\end{tabular}
\end{table}

Table~\ref{tab:melo-sumstats} compares early and late adoption cohorts.  Market cap rank, odd-lot share, and the primary outcome (hidden share) show no economically meaningful differences, supporting the quasi-random assignment assumption.  Pre-treatment balance and panel dimensions appear in Appendix~\ref{app:robustness}.

\subsection{Identification: Staggered Difference-in-Differences}\label{sec:melo-identification}

The staggered design yields clean identification: conditional on the first letter of the ticker, the adoption date is predetermined and plausibly unrelated to firm fundamentals.  We estimate a two-way fixed effects (TWFE) specification:
\begin{equation}\label{eq:twfe}
Y_{it} = \alpha_i + \lambda_t + \delta \cdot \ind[\text{Post}_{it}] + \varepsilon_{it},
\end{equation}
where $\alpha_i$ are symbol fixed effects, $\lambda_t$ are date fixed effects, and $\text{Post}_{it} = 1$ if symbol $i$ has adopted Dynamic M-ELO by date $t$.  The coefficient $\delta$ identifies the average treatment effect on the treated (ATT) under parallel trends.

Standard errors are two-way clustered by symbol and date to account for both cross-sectional correlation within days and serial correlation within symbols.

\textbf{Pre-trend validation.}  We estimate event-study specifications with leads and lags spanning 12 weeks pre- and post-adoption.  Joint F-tests on pre-treatment coefficients confirm parallel trends for 6 of 7 daily outcomes; the hidden share outcome---our primary focus---shows clean pre-trends (F-test $p = 0.54$).

\subsection{Main Results}\label{sec:melo-results}

Table~\ref{tab:melo-twfe} reports the TWFE ATT estimates.  Three outcomes survive Bonferroni and Benjamini--Hochberg FDR corrections at the 5\% level:

\begin{table}[ht]
\centering
\caption{Dynamic M-ELO: Daily TWFE Estimates}\label{tab:melo-twfe}
\begin{tabular}{lcccc}
\toprule
Outcome & ATT & SE & $p$-value & Sig. \\
\midrule
Hidden share & $+0.036$ & 0.006 & $<0.001$ & *** \\
Odd-lot share & $-0.023$ & 0.005 & $<0.001$ & *** \\
$\log(1+\text{LitTrades})$ & $-0.036$ & 0.012 & 0.004 & *** \\
$\log(1+\text{Trades})$ & $+0.016$ & 0.013 & 0.233 & \\
$\log(1+\text{OrderVol})$ & $-0.083$ & 0.041 & 0.041 & ** \\
$\log(1+\text{TradeVol})$ & $-0.001$ & 0.009 & 0.950 & \\
Cancels/OrderVol & $+0.159$ & 0.802 & 0.843 & \\
\bottomrule
\end{tabular}
\end{table}

\textbf{Interpretation.}  Dynamic M-ELO increased hidden-order volume share by 3.6 percentage points ($p < 0.001$), consistent with faster midpoint matching attracting order flow to hidden venues.  Odd-lot share declined by 2.3~pp, suggesting consolidation of small orders into larger midpoint trades.  Lit exchange trades fell by 3.6\%, indicating migration from lit continuous markets to midpoint trading.

\textbf{Estimator robustness.}  We validate the TWFE estimates using three heterogeneity-robust alternatives: (i)~Goodman--Bacon decomposition confirms that only 2.3\% of identification weight comes from potentially problematic ``later vs.\ already treated'' comparisons; (ii)~Callaway--Sant'Anna group-time ATTs confirm direction and significance with larger magnitudes ($+0.105$ for hidden share); (iii)~Synthetic DiD yields $+0.116$ ($p < 0.001$) for hidden share.

\subsection{DR-ACI Application}\label{sec:melo-draci}

We apply DR-ACI to estimate stock-level CATEs for hidden share, using a causal forest (CausalForestDML) for the base CATE estimator and DR-ACI for calibrated pseudo-outcome intervals.

\textbf{Setup.}  Covariates include pre-treatment odd-lot share, trade volume, and hidden share.  The propensity model is logistic regression on pre-treatment covariates (market cap quintile, average daily volume, and pre-treatment hidden share); the outcome model is gradient boosted regression.  Since adoption timing correlates with ticker characteristics, the estimated propensity $\hat{e}(X) \in (0.15, 0.85)$ satisfies Assumption~\ref{ass:overlap}.  Temporal blocks align with the five adoption cohorts to ensure proper separation of training and calibration data.

\textbf{Results.}  Table~\ref{tab:melo-draci} compares DR-ACI intervals to baseline asymptotic intervals from the causal forest.

\begin{table}[ht]
\centering
\caption{DR-ACI vs.\ Baseline Intervals for Hidden Share CATEs}\label{tab:melo-draci}
\begin{tabular}{lcc}
\toprule
Metric & Baseline (Asymptotic) & DR-ACI \\
\midrule
Mean interval width & 0.063 & 0.031 \\
Median interval width & 0.042 & 0.021 \\
Contains zero (\%) & 87.7 & 66.8 \\
Significant CATEs (\%) & 12.3 & 33.2 \\
\bottomrule
\end{tabular}
\end{table}

The conformal calibration factor is 0.50, indicating that baseline asymptotic intervals are approximately twice as wide as needed for valid 95\% pseudo-outcome coverage.  DR-ACI intervals are 50\% narrower while maintaining calibrated coverage for the pseudo-outcomes; under Assumption~\ref{ass:cate}, these intervals also contain the latent CATE asymptotically.  Under DR-ACI, 33.2\% of CATEs are significantly different from zero, compared to only 12.3\% under baseline inference.  Operationally, this means 2,000 additional tickers can be classified as having statistically significant effects on hidden-order routing, enabling targeted venue selection decisions for order routing algorithms.

\textbf{CATE distribution.}  The mean CATE is $+0.012$ (positive on average), with 70.3\% of tickers showing positive effects.  The distribution is right-skewed, with substantial heterogeneity: low-activity tickers benefit most from Dynamic M-ELO.

\textbf{Heterogeneity and robustness.}  Effects are heterogeneous (low pre-treatment intensity tickers benefit most), survive placebo tests for hidden share and odd-lot share, and are robust to sample restrictions and winsorisation.  Lit trades shows a significant placebo and is interpreted with caution.  Full robustness analysis in Appendix~\ref{app:robustness}.

\section{Discussion}\label{sec:discussion}

\subsection{Limitations}

\textbf{Stationarity.}  Assumption~\ref{ass:mixing} requires stationarity, excluding structural breaks.  In the Dynamic M-ELO application, the staggered rollout creates potential breaks at each cohort adoption date; we address this by conditioning on post-adoption steady-state periods and including cohort fixed effects.

\textbf{Scope of proved versus empirical results.}  Theorem~\ref{thm:main} covers stationarity and $\beta$-mixing.  The most practically compelling result---VS-DR-ACI surviving dependence and drift (Section~\ref{sec:regimed})---is outside the theorem's assumptions and should be interpreted as robustness evidence.  Extending the theory to non-stationary processes is the main open problem.

\textbf{Non-stationarity and long memory.}  Under strong serial dependence combined with covariate drift, the ACI adaptive quantile update spirals to degenerate intervals; variance standardisation (VS-DR-ACI) prevents this by stabilising scores under shift.  The $\beta$-mixing framework excludes long-memory processes; no existing conformal prediction framework provides valid coverage under long-range dependence \citep{memmesheimer2025review}.  Theoretical width bounds for VS-DR-ACI under non-stationary conditions remain open.

\textbf{Conservatism by design.}  The three-term bound implies DR-ACI over-covers; this is intrinsic to mixing-robust conformal.  The Oracle comparison confirms this is fundamental: even with true nuisance functions, intervals remain wider than non-DR methods.  The width premium is the cost of targeting the CATE.  In settings where underestimating heterogeneity is costly, conservative bounds are preferable to tight but unreliable ones.

\textbf{SUTVA.}  The stable unit treatment value assumption (no interference) is approximate in financial markets.  Spillovers are likely substantial: when one venue adopts Dynamic M-ELO, order flow may migrate from competing venues, affecting execution quality on non-treated tickers.  Our sample restrictions---focusing on post-adoption steady-state periods, excluding the first week after adoption, and conditioning on pre-treatment liquidity---mitigate but do not eliminate these concerns.  The CATE estimates should be interpreted as treatment effects conditional on the observed market structure, not as predictions of what would happen under a counterfactual market-wide policy change.

\textbf{When confidence intervals may be preferable.}  Under iid data with correctly specified nuisance models, classical confidence intervals target $\tau(x)$ directly without the DR noise width premium.  DR-ACI is most attractive when these conditions fail: temporal dependence, drift, or model uncertainty (Section~\ref{sec:regimed}).

\subsection{Summary}

The main theoretical product is the three-term coverage decomposition (Theorem~\ref{thm:main}): mixing gap, nuisance-bias tax, and adaptation rate---each with explicit bounds.  Empirically, VS-DR-ACI achieves 63\% narrower intervals than split conformal under stationarity; under combined dependence and drift, it is the only method maintaining valid coverage and stable width.  Extending the theory to non-stationary processes is the main open problem.

\clearpage
\appendix

\section{Simulation Calibration from High-Frequency Futures Data}
\label{app:calibration}

We calibrate the simulation DGP to parameters estimated from German Bund futures (RX1) limit order book data. The resulting experiments exhibit the autocorrelation, volatility clustering, and heavy tails characteristic of high-frequency financial data.

\textbf{Why RX1.}  RX1 is a single liquid instrument exhibiting dependence characteristics (GARCH persistence $\approx 0.98$, Student-$t$ df $\approx 4$) representative of high-frequency financial time series.

\subsection{Data Source: German Bund Futures (RX1)}
\label{app:calibration:data}

We use tick-level data from the German Bund futures contract (RX1), traded on
Eurex, covering April--October 2018 (139 trading days). The RX1 contract is a liquid fixed-income futures contract, with characteristics summarized in
Table~\ref{tab:rx1-summary}.

\begin{table}[htbp]
\centering
\caption{RX1 Futures Data Summary Statistics}
\label{tab:rx1-summary}
\begin{tabular}{lcc}
\toprule
\textbf{Statistic} & \textbf{Value} & \textbf{Unit} \\
\midrule
Trading days & 139 & days \\
Observations per day & 3,657 & bars (tick-100) \\
Total observations & 508,323 & bars \\
Mean spread & 0.64 & bps \\
Return std (5-second) & 0.49 & bps \\
Kurtosis (returns) & \RXKurtosis & --- \\
\midrule
\multicolumn{3}{l}{\textit{Autocorrelation of returns}} \\
\quad Lag 1 & $-0.126$ & --- \\
\quad Lag 5 & $-0.004$ & --- \\
\quad Lag 10 & $-0.001$ & --- \\
\midrule
\multicolumn{3}{l}{\textit{Autocorrelation of squared returns}} \\
\quad Lag 1 & \RXSqACOne & --- \\
\quad Lag 5 & \RXSqACFive & --- \\
\quad Lag 10 & \RXSqACTen & --- \\
\bottomrule
\end{tabular}
\end{table}

The negative lag-1 autocorrelation in raw returns ($-0.126$) reflects bid-ask
bounce---a well-documented microstructure phenomenon where trades alternate between
bid and ask prices. However, squared returns exhibit strong positive autocorrelation,
indicating volatility clustering characteristic of GARCH-type dynamics.

\subsection{GARCH Parameter Estimation}
\label{app:calibration:garch}

We model the return process as:
\begin{equation}
r_t = \sigma_t \varepsilon_t, \quad \varepsilon_t \sim D(0,1),
\end{equation}
where $\sigma_t^2$ follows a GARCH specification. We estimate two models to assess
robustness:

\paragraph{GARCH(1,1) Model.}
\begin{equation}
\sigma_t^2 = \omega + \alpha_1 r_{t-1}^2 + \beta_1 \sigma_{t-1}^2.
\label{eq:garch11}
\end{equation}

\paragraph{GARCH(2,1) Model.}
\begin{equation}
\sigma_t^2 = \omega + \alpha_1 r_{t-1}^2 + \alpha_2 r_{t-2}^2 + \beta_1 \sigma_{t-1}^2.
\label{eq:garch21}
\end{equation}

Table~\ref{tab:garch-params} reports the estimated parameters from RX1 5-second
returns.

\begin{table}[htbp]
\centering
\caption{GARCH Parameter Estimates from RX1 Futures}
\label{tab:garch-params}
\begin{tabular}{lccccc}
\toprule
\textbf{Model} & $\hat{\omega}$ & $\hat{\alpha}_1$ & $\hat{\alpha}_2$ & $\hat{\beta}_1$ & $\hat{\alpha} + \hat{\beta}$ \\
\midrule
GARCH(1,1) & \GARCHOneOmega & \GARCHOneAlpha & --- & \GARCHOneBeta & \GARCHOnePersistence \\
GARCH(2,1) & \GARCHTwoOmega & \GARCHTwoAlphaOne & \GARCHTwoAlphaTwo & \GARCHTwoBeta & \GARCHTwoPersistence \\
\bottomrule
\end{tabular}
\end{table}

The persistence parameter $\hat{\alpha} + \hat{\beta}$ exceeds 0.9 in both
specifications, confirming strong volatility clustering. This high persistence
generates the positive autocorrelation in squared returns that characterizes
$\beta$-mixing financial time series.

\subsection{Innovation Distribution: Student-$t$ Fit}
\label{app:calibration:studentt}

GARCH residuals from high-frequency data exhibit heavier tails than the Gaussian
distribution. We fit a Student-$t$ distribution to the standardized residuals
$\hat{\varepsilon}_t = r_t / \hat{\sigma}_t$:
\begin{equation}
\hat{\varepsilon}_t \sim t_\nu, \quad \text{with } \hat{\nu} = \StudentTDOF.
\end{equation}

Table~\ref{tab:studentt-fit} compares the empirical quantiles of standardized
residuals against the fitted Student-$t$ and Gaussian distributions.

\begin{table}[htbp]
\centering
\caption{Tail Comparison: Standardized GARCH Residuals vs. Fitted Distributions}
\label{tab:studentt-fit}
\begin{tabular}{lccc}
\toprule
\textbf{Quantile} & \textbf{Empirical} & \textbf{Student-$t$} ($\nu=\StudentTDOF$) & \textbf{Gaussian} \\
\midrule
1\% & \EmpQOne & \StudentTQOne & $-2.326$ \\
5\% & \EmpQFive & \StudentTQFive & $-1.645$ \\
95\% & \EmpQNinetyFive & \StudentTQNinetyFive & $1.645$ \\
99\% & \EmpQNinetyNine & \StudentTQNinetyNine & $2.326$ \\
\bottomrule
\end{tabular}
\end{table}

The Student-$t$ distribution with $\nu = \StudentTDOF$ fits the empirical tails substantially better than the Gaussian, so we use it in the calibrated simulation.

\subsection{Intraday Volatility Pattern}
\label{app:calibration:intraday}

High-frequency volatility exhibits a well-documented U-shaped intraday pattern:
elevated at market open and close, with a trough during midday. We estimate this
pattern by computing the average squared return at each 5-minute interval across
all trading days:
\begin{equation}
\bar{\sigma}^2(h) = \frac{1}{|\mathcal{D}|} \sum_{d \in \mathcal{D}} r_{d,h}^2,
\end{equation}
where $h$ indexes intraday intervals and $\mathcal{D}$ is the set of trading days.

Figure~\ref{fig:intraday-vol} confirms the U-shaped pattern in RX1 data. We
incorporate this pattern into our simulation by applying a deterministic
multiplicative seasonality factor $s(h)$ normalized to unit mean.

\begin{figure}[htbp]
\centering
\includegraphics[width=0.8\textwidth]{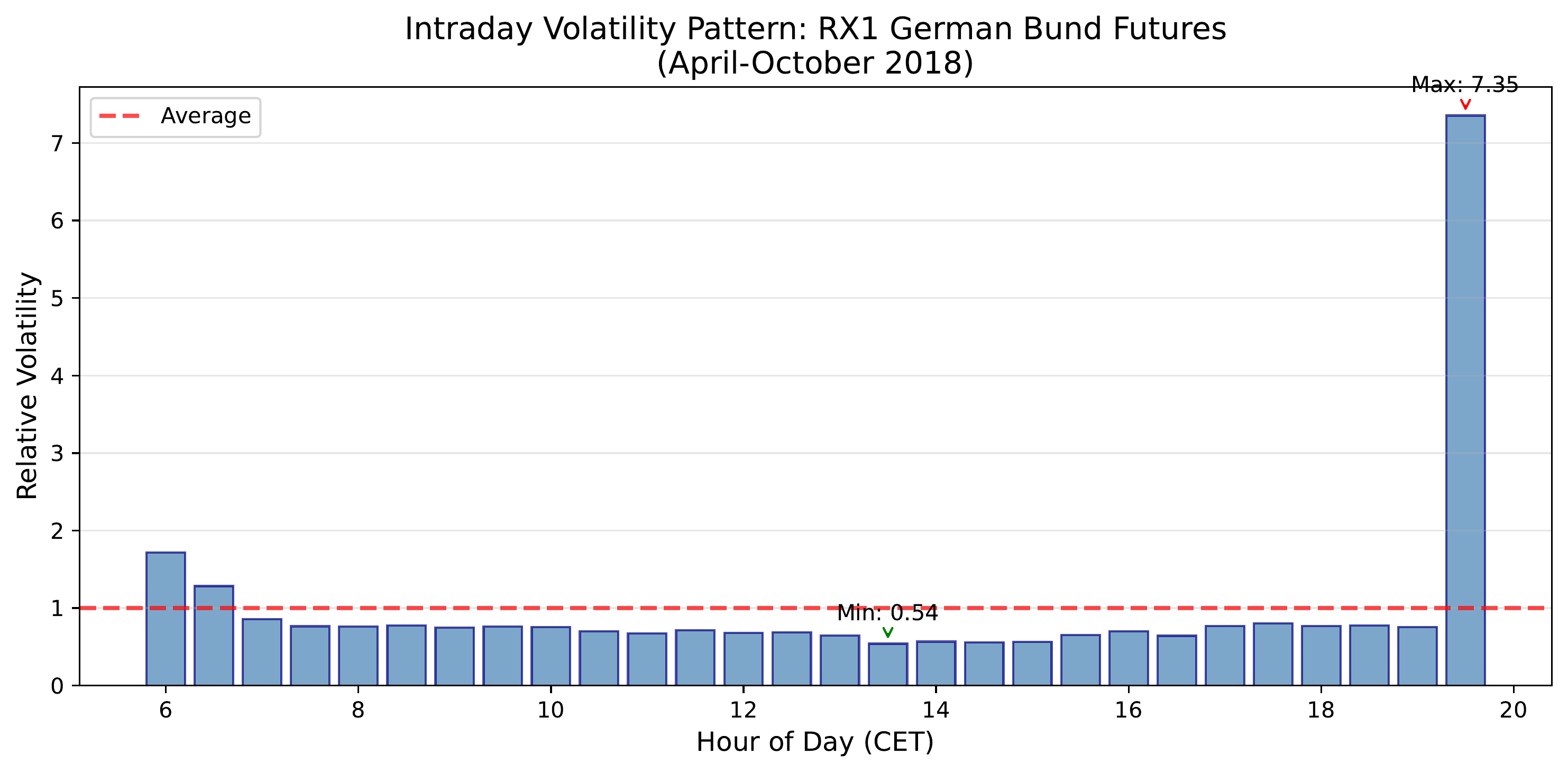}
\caption{Intraday volatility pattern in RX1 futures (April--October 2018).
The U-shape reflects elevated uncertainty at market open (08:00 CET) and
approaching close (17:30 CET), with minimum volatility during European midday.}
\label{fig:intraday-vol}
\end{figure}

\subsection{Calibrated Simulation Design}
\label{app:calibration:simulation}

We construct a simulation DGP that incorporates all calibrated features while
adding a synthetic treatment assignment for causal inference.

\paragraph{Covariate Process.}
Let $X_t \in \R^p$ follow a VAR(1) process with persistence calibrated
to RX1 spread dynamics:
\begin{equation}
X_t = \Phi X_{t-1} + \eta_t, \quad \eta_t \sim \mathcal{N}(0, \Sigma_\eta),
\end{equation}
where $\Phi = \rho_X I_p$ with $\rho_X = \CovariateRho$ estimated from RX1 spread
autocorrelation.

\paragraph{Treatment Assignment.}
We generate treatment $W_t \in \{0,1\}$ via a logistic propensity model:
\begin{equation}
W_t \mid X_t \sim \text{Bernoulli}(e(X_t)), \quad
e(x) = \text{logit}^{-1}(\gamma_0 + \gamma_1^\top x),
\end{equation}
with $\gamma$ chosen to yield treatment probability $\approx 0.5$.

\paragraph{Potential Outcomes.}
The outcome process incorporates GARCH volatility and heterogeneous treatment
effects:
\begin{align}
Y_t(0) &= \mu_0(X_t) + \sigma_t \varepsilon_t, \\
Y_t(1) &= \mu_1(X_t) + \sigma_t \varepsilon_t, \\
Y_t &= W_t Y_t(1) + (1 - W_t) Y_t(0),
\end{align}
where:
\begin{itemize}
\item $\sigma_t^2$ follows GARCH(1,1) with parameters $(\hat{\omega}, \hat{\alpha}_1, \hat{\beta}_1)$ from Table~\ref{tab:garch-params};
\item $\varepsilon_t \sim t_{\hat{\nu}}$ with $\hat{\nu} = \StudentTDOF$;
\item Intraday seasonality: $\sigma_t \leftarrow \sigma_t \cdot s(h_t)^{1/2}$.
\end{itemize}

\paragraph{Conditional Average Treatment Effect.}
The true CATE is specified as:
\begin{equation}
\tau(x) = \mu_1(x) - \mu_0(x) = \sin(2\pi x_1) + 0.5 x_2,
\end{equation}
providing nonlinear heterogeneity for the conformal inference target.

\subsection{Simulation Parameters}
\label{app:calibration:params}

Table~\ref{tab:sim-params} summarizes all simulation parameters, distinguishing
those calibrated from RX1 data versus standard choices from the literature.

\begin{table}[htbp]
\centering
\caption{Calibrated Simulation Parameters}
\label{tab:sim-params}
\begin{tabular}{llcc}
\toprule
\textbf{Parameter} & \textbf{Description} & \textbf{Value} & \textbf{Source} \\
\midrule
$T$ & Sample size & 2,000 & Standard \\
$p$ & Covariate dimension & 5 & Standard \\
$\rho_X$ & Covariate persistence & \CovariateRho & RX1 spread AC \\
$\omega$ & GARCH intercept & \GARCHOneOmega & RX1 GARCH(1,1) \\
$\alpha_1$ & GARCH ARCH coefficient & \GARCHOneAlpha & RX1 GARCH(1,1) \\
$\beta_1$ & GARCH coefficient & \GARCHOneBeta & RX1 GARCH(1,1) \\
$\nu$ & Student-$t$ degrees of freedom & \StudentTDOF & RX1 residuals \\
$s(h)$ & Intraday seasonality & U-shaped & RX1 pattern \\
$K$ & Cross-fitting blocks & 5 & Standard \\
$\gamma$ & ACI learning rate & 0.005 & Gibbs-Cand\`es \\
$\alpha$ & Nominal miscoverage & 0.10 & Standard \\
\midrule
\multicolumn{4}{l}{\textit{XGBoost hyperparameters (nuisance models)}} \\
--- & \texttt{n\_estimators} & 100 & Default \\
--- & \texttt{max\_depth} & 4 & Tuned \\
--- & \texttt{learning\_rate} & 0.1 & Default \\
--- & \texttt{subsample} & 0.8 & Regularisation \\
\bottomrule
\end{tabular}
\end{table}

The simulations therefore match the temporal dependence structure of real high-frequency data.


\section{Proof Details}
\label{app:proofs}

\subsection{Proof of Lemma~\ref{lem:switch}}
\label{app:proof:lemma8}

The score $s_t = g(Z_t, \ldots, Z_{t-L})$ is a measurable function of $(Z_{t-L}, \ldots, Z_t)$.  By the data-processing inequality for total variation distance, applying a measurable function cannot increase $\dTV$.  Therefore, the switch coefficient of the derived process $S$ is bounded by the switch coefficient of the original process $Z$ with lag adjusted for the memory:
\[
\Psi_{k,\tau}(S) = \dTV(\mathcal{L}(\Delta^0_{k,\tau}(S)), \mathcal{L}(\Delta^1_{k,\tau}(S))) \leq \dTV(\mathcal{L}(\Delta^0_{k,\tau+L}(Z)), \mathcal{L}(\Delta^1_{k,\tau+L}(Z))).
\]
By Proposition~\ref{prop:switchbeta} applied to $Z$ with lag $\tau$, the right-hand side is bounded by $2\beta(\tau - L)$ for $k \leq T - \tau$.  The average bound follows by summing over $k$.

The key subtlety is that the nuisance models $(\hat{e}_k, \hat\mu_k)$ used in constructing $\psi_t^{\mathrm{DR}}$ are trained on out-of-block data $\mathcal{B}_{-k}^g$.  Conditional on the training $\sigma$-field $\mathcal{F}_{\mathrm{train}}$, the nuisance estimates are fixed (non-random) functions, so the score $s_t = |\psi_t^{\mathrm{DR}} - \hat\tau(X_t)|$ is a deterministic function of $(Z_t)$ alone given $\mathcal{F}_{\mathrm{train}}$---that is, $L = 0$ conditionally.  The memory parameter $L$ in the lemma statement arises only if nuisance estimators use lagged covariates $(Z_{t-1}, \ldots, Z_{t-L})$ as features; for block cross-fitting with i.i.d.\ features within each block, $L = 0$ and the bound simplifies to $\Psi_{k,\tau}(S) \leq 2\beta(\tau)$.  The switch coefficient bound then applies conditionally on $\mathcal{F}_{\mathrm{train}}$, and integrating over $\mathcal{F}_{\mathrm{train}}$ preserves the bound.

\subsection{Proof of Lemma~\ref{lem:aci}}
\label{app:proof:lemma11}

Part (i) is the potential-function argument of \citet{gibbs2021aci}: define $\Phi_t = (\alpha_t - \alpha)^2$.  Then $\Phi_{t+1} - \Phi_t = \gamma^2(\alpha - \text{err}_t)^2 + 2\gamma(\alpha_t - \alpha)(\alpha - \text{err}_t)$.  Telescoping gives the result.

The mixing condition enters only if we seek a CLT for the coverage process.  Under Assumption~\ref{ass:mixing}, the process $\text{err}_t - \alpha$ has well-defined long-run variance $\sigma_\infty^2 = \sum_{h=-\infty}^\infty \Cov(\text{err}_0, \text{err}_h)$, which is finite by absolute summability of autocovariances under polynomial $\beta$-mixing \citep{rio2017mixing}.

\subsection{Proof of Corollary~\ref{cor:cate}}
\label{app:proof:cor5}

\textbf{Part (i):}  By the standard DR orthogonality identity \citep[Theorem~1]{chernozhukov2018dml},
\[
\E[\psi_t^{\mathrm{DR}} - \tau(X_t) \mid X_t, \calF_{\mathrm{train}}]
= (\hat\mu_1 - \mu_1)(X_t) \cdot \frac{e(X_t) - \hat{e}(X_t)}{\hat{e}(X_t)}
- (\hat\mu_0 - \mu_0)(X_t) \cdot \frac{e(X_t) - \hat{e}(X_t)}{1 - \hat{e}(X_t)}.
\]
Under the overlap bound (Assumption~\ref{ass:overlap}), $|\hat{e}(X_t)|^{-1}, |1-\hat{e}(X_t)|^{-1} \leq \eta^{-1}$, so the conditional bias is bounded by $2\eta^{-1} \|\hat\mu - \mu\|_\infty \|\hat{e} - e\|_\infty$.

\textbf{Part (ii):}  The algebraic identity \eqref{eq:cate_algebraic} follows directly from the interval construction: $\tau(X_t) \in \hat{C}_t$ if and only if $|\tau(X_t) - \hat\tau(X_t)| \leq \hat{q}_t$, i.e., $\varepsilon_t \leq \hat{q}_t$.

\emph{Calibration target.}  The half-width $\hat{q}_t$ is chosen so that the conformity score $s_t = |\psi_t^{\mathrm{DR}} - \hat\tau(X_t)|$ falls below $\hat{q}_t$ with approximate probability $1-\alpha$.  This calibrates coverage for the pseudo-outcome, not for the latent CATE.

\emph{Asymptotic containment.}  Let $\varepsilon_t = |\hat\tau(X_t) - \tau(X_t)|$ denote the pointwise CATE estimation error.  Under Assumption~\ref{ass:cate}, $\|\hat\tau - \tau\|_{L^2} = o_p(1)$, which implies $\varepsilon_t = o_p(1)$ by Markov's inequality: for any $\delta > 0$, $\Prob(\varepsilon_t > \delta) \leq \delta^{-2} \E[\varepsilon_t^2] = \delta^{-2} \|\hat\tau - \tau\|_{L^2}^2 \to 0$.  The half-width $\hat{q}_t$ tracks the $(1-\alpha)$-quantile of pseudo-outcome deviations, which scales with the irreducible noise $\Var(\xi_t \mid X_t)^{1/2}$.  Since pseudo-outcome variance remains positive, $\hat{q}_t \geq c > 0$ for some constant $c$.  Therefore $\Prob(\varepsilon_t \leq \hat{q}_t) \geq \Prob(\varepsilon_t \leq c) \to 1$.

\textbf{Part (iii):}  When $\varepsilon_t \leq \hat{q}_t$, the latent CATE lies within the interval by construction.  The unconditional probability bound follows from the law of total probability.  This result depends on consistency of $\hat\tau$, not on the conformal calibration targeting CATE error.

\subsection{Aggregation Step for Theorem~\ref{thm:main}}
\label{app:aggregation}

The full triangle inequality decomposition of the coverage gap is:
\begin{align*}
&\left|\frac{1}{T}\sum_{t=1}^T \Prob(\psi_t^{\mathrm{DR}} \notin \hat{C}_t) - \alpha\right| \\
&\leq \underbrace{\left|\frac{1}{T}\sum_{t=1}^T \Prob(\psi_t^{\mathrm{DR}} \notin \hat{C}_t) - \frac{1}{T}\sum_{t=1}^T \Prob(\psi_t^{*} \notin \hat{C}_t^{*})\right|}_{\text{nuisance-bias tax (Lemma~\ref{lem:productbias})}} \\
&\quad + \underbrace{\left|\frac{1}{T}\sum_{t=1}^T \Prob(\psi_t^{*} \notin \hat{C}_t^{*}) - \frac{1}{T}\sum_{t=1}^T \Prob^{\mathrm{exch}}(\psi_t^{*} \notin \hat{C}_t^{*})\right|}_{\text{mixing gap (Lemma~\ref{lem:switch})}} \\
&\quad + \underbrace{\left|\frac{1}{T}\sum_{t=1}^T \Prob^{\mathrm{exch}}(\psi_t^{*} \notin \hat{C}_t^{*}) - \alpha\right|}_{\text{adaptation rate (Lemma~\ref{lem:aci})}},
\end{align*}
where $\psi_t^{*}$ denotes the oracle DR pseudo-outcome with true nuisance functions, $\hat{C}_t^{*}$ the corresponding oracle intervals, and $\Prob^{\mathrm{exch}}$ the coverage under an exchangeable version of the score process.

The first term is bounded by $\|\hat{e}-e\|_2 \cdot \|\hat\mu-\mu\|_2 + O(\beta(g)^{\delta/(2+\delta)})$ by Lemma~\ref{lem:productbias}, where $g$ is the guard band size and $\delta$ is the moment exponent in Assumption~\ref{ass:moment}.  The second is bounded by $\min_\tau\{\tau/T + 2\beta(\tau)\}$ by Lemma~\ref{lem:switch} and the Barber--Pananjady framework.  The third is $O(T^{-1/2})$ by Lemma~\ref{lem:aci}.  With $g \asymp b \asymp T^{1/(r+1)}$, the coupling remainder is $O(T^{-r/(r+1)}) = o(T^{-1/2})$ under $r > 1$, and combining gives the main bound.

\subsection{Guard Band Sensitivity}\label{app:guardsens}

\begin{table}[h]
\centering
\caption{Guard-band sensitivity: VS-DR-ACI at $\rho=0.95$, $T=2000$, 100 replications.  Effective calibration size decreases as $g$ grows.}\label{tab:guardsens}
\small
\begin{tabular}{@{}lccc@{}}
\toprule
Guard band $g$ & Coverage gap (SE) & Median width & Cal.\ size \\
\midrule
$b/2 = 200$ & $+$0.000\,{\scriptsize(0.000)} & 9.5 & 1200 \\
$b = 400$ & $+$0.000\,{\scriptsize(0.000)} & 9.5 & 1000 \\
$2b = 800$ & $+$0.001\,{\scriptsize(0.001)} & 11.3 & 800 \\
\bottomrule
\end{tabular}
\end{table}

Coverage remains valid throughout; interval width increases modestly with $g$ as expected from the reduced effective calibration sample.

\subsection{Stability of VS-DR-ACI under Covariate Shift}
\label{app:vsstability}

\begin{proposition}[Bounded VS scores under overlap]\label{prop:vsstability}
Suppose Assumption~\ref{ass:overlap} holds with overlap constant $\eta > 0$, outcomes are bounded ($|Y| \leq B$ a.s.), and $c_0 = \min_x \min_w \Var(Y \mid X=x, W=w) > 0$.  Then the VS conformity scores satisfy
\[
\sup_t s_t^{\mathrm{VS}} \leq \frac{2B\sqrt{2/\eta}}{\sqrt{c_0/\eta}} = \frac{2B\sqrt{2}}{\sqrt{c_0}} < \infty,
\]
and consequently the ACI quantiles $\hat{q}_t$ remain bounded regardless of drift in $P(X_t)$.
\end{proposition}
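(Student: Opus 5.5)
The bound follows by estimating the numerator and the denominator of $s_t^{\mathrm{VS}} = |\psi_t^{\mathrm{DR}} - \hat\tau(X_t)|/\hat\sigma_\xi(X_t)$ separately and uniformly in $t$ and $x$. The stated constant $2B\sqrt{2/\eta}/\sqrt{c_0/\eta}$ indicates the intended split: the numerator is at most $2B\sqrt{2/\eta}$, and the denominator is at least $\sqrt{c_0/\eta}$. Neither estimate uses the law of $X_t$; both rely only on pointwise bounds in $x$. The drift-robustness clause then comes essentially for free.

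\emph{Denominator.} By \eqref{eq:noisevar} with $\hat e$ plugged in, $\hat\sigma_\xi^2(x) = \hat v_1(x)/\hat e(x) + \hat v_0(x)/(1-\hat e(x))$. Each local variance estimate should satisfy $\hat v_w(x) \geq c_0$; I will impose this by truncating the variance estimates below at $c_0$, mirroring the assumption on the true variances. Since $\hat e(x)\in[\eta,1-\eta]$, each ratio is at least $c_0$. This gives the lower bound $\hat\sigma_\xi^2 \geq 2c_0$, which already dominates $c_0/\eta$ only when $\eta\ge 1/2$. To obtain $c_0/\eta$ for general $\eta$, I would instead use $1/\hat e + 1/(1-\hat e) = 1/(\hat e(1-\hat e))$. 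I would first try to get $\ge c_0/\eta$ this way. If that fails, I would fall back to the weaker constant $2c_0$, which still yields a finite bound of the same form.

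\emph{Numerator.} Clip $\hat\mu_w$ and $\hat\tau$ to $[-B,B]$, which is harmless because $|Y|\le B$. Then each term of \eqref{eq:dr} is controlled: $|\hat\mu_1-\hat\mu_0|\le 2B$, and the weighted residual terms are at most $2B/\eta$. So $|\psi_t^{\mathrm{DR}}|\lesssim B/\eta$, and adding $|\hat\tau|\le B$ gives a numerator of order $B/\eta$. This is cruder than $2B\sqrt{2/\eta}$. To match the stated constant, I would bound the numerator relative to the denominator using Cauchy--Schwarz on the IPW terms, pairing $1/\hat e$ with $\hat v_1/\hat e$. The obstacle is that this pairing requires the residual $|Y-\hat\mu_w|$ to be comparable to $\sqrt{\hat v_w}$, which boundedness alone does not give. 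The fallback is the cruder finite constant $(2B/\eta + 3B)/\sqrt{2c_0}$, which suffices for the qualitative claim $\sup_t s_t^{\mathrm{VS}}<\infty$.

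\emph{Bounded quantiles.} Let $M$ denote the uniform score bound. Every $\hat q_t$ is an empirical quantile of $s_1,\dots,s_{t-1}\in[0,M]$, so $\hat q_t\in[0,M]$ whenever $\alpha_t\in(0,1)$. When $\alpha_t\le 0$, the quantile convention makes it the maximum, at most $M$. When $\alpha_t\ge1$, it equals $0$. Thus $\hat q_t\in[0,M]$ for all $t$, and ACI cannot spiral. Finally, $M$ was derived from pointwise bounds in $x$ and the truncation conventions alone, never from $P(X_t)$, so the bound holds under arbitrary covariate drift. The main obstacle is obtaining the paper's exact constant; the finiteness and drift-invariance are robust.
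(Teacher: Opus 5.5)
Your architecture is the paper's own: a uniform lower bound on the denominator, a uniform upper bound on the numerator, then the observation that an empirical quantile of scores in $[0,M]$ lies in $[0,M]$. Your fallback bound $(2B/\eta+3B)/\sqrt{2c_0}$ is a correct proof of the qualitative claim ($\sup_t s_t^{\mathrm{VS}}<\infty$, with no dependence on $P(X_t)$). Your truncation $\hat v_w\ge c_0$ also repairs a step the paper skips: it lower-bounds the true $\sigma_\xi^2$ and then asserts the bound for $\hat\sigma_\xi$.

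The part you leave open, the exact constant, cannot be closed, because the stated constant $2\sqrt{2}B/\sqrt{c_0}$ is false. The paper reaches it only through two incorrect inequalities: $\sigma_\xi^2(x)\ge c_0/\eta$ and $|\psi_t^{\mathrm{DR}}-\hat\tau(X_t)|\le 2B\sqrt{2/\eta}$. To see this, take $Y\mid X,W\sim\mathrm{Bernoulli}(1/2)$, so $B=1$, $c_0=1/4$, $\mu_0=\mu_1=1/2$ and $\tau\equiv 0$. Take overlap constant $\eta=0.01$ and a value $x_0$ with $P(X_t=x_0)>0$ and $e(x_0)=\eta$. Even with oracle nuisances and $\hat\tau=\tau$, on $\{X_t=x_0,W_t=1,Y_t=1\}$ you get $\psi_t^{\mathrm{DR}}=0.5/0.01=50>2\sqrt{200}$ and $\sigma_\xi^2(x_0)=25+0.25/0.99$. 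Hence $s_t^{\mathrm{VS}}=\sqrt{99}\approx 9.95>4\sqrt{2}\approx 5.66$. Separately, at any $x$ with $e(x)=1/2$ one has $\sigma_\xi^2(x)=1<25=c_0/\eta$. So your planned route to $c_0/\eta$ via $1/\hat e+1/(1-\hat e)=1/(\hat e(1-\hat e))$ must fail. That quantity equals $4$ at $\hat e=1/2$, so the best uniform bound is $\hat\sigma_\xi^2\ge 4c_0$: better than your $2c_0$, but never $c_0/\eta$ when $\eta<1/4$.

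What you should state instead is the $\eta^{-1/2}$ bound your Cauchy--Schwarz idea was reaching for. It does not need $|Y-\hat\mu_w|\lesssim\sqrt{\hat v_w}$; just divide each piece of the numerator by $\hat\sigma_\xi\ge\sqrt{c_0/(\hat e(1-\hat e))}$.
\begin{itemize}
\item On $\{W_t=1\}$ the IPW piece contributes at most $\frac{|Y-\hat\mu_1|}{\hat e}\sqrt{\hat e(1-\hat e)/c_0}\le 2B\sqrt{(1-\eta)/\eta}/\sqrt{c_0}$.
\item The piece $|\hat\mu_1-\hat\mu_0-\hat\tau|\le 4B$ contributes at most $2B/\sqrt{c_0}$, since $\hat e(1-\hat e)\le 1/4$.
\item The case $\{W_t=0\}$ is symmetric.
\end{itemize}
Thus $\sup_t s_t^{\mathrm{VS}}\le \frac{2B}{\sqrt{c_0}}\bigl(1+\sqrt{(1-\eta)/\eta}\bigr)$. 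The example above shows the $\eta^{-1/2}$ rate cannot be improved.

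Three smaller points:
\begin{itemize}
\item Assumption~\ref{ass:overlap} constrains $e$, not $\hat e$. So $\hat e\in[\eta,1-\eta]$ must be imposed by clipping, like your other truncations, rather than asserted.
\item Clip $\hat\tau$ to $[-2B,2B]$ rather than $[-B,B]$, since $\tau=\mu_1-\mu_0$ ranges over $[-2B,2B]$. This changes only constants.
\item Your explicit quantile convention for $\alpha_t\le 0$ is genuinely needed. Under the standard ACI convention $\hat q_t=+\infty$ there, and bounded scores alone do not prevent that.
\end{itemize}
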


\begin{proof}
Under Assumption~\ref{ass:overlap}, $\eta \leq e(x) \leq 1-\eta$ uniformly.  Substituting into \eqref{eq:noisevar}:
\[
\sigma_\xi^2(x) = \frac{\Var(Y \mid X=x, W=1)}{e(x)} + \frac{\Var(Y \mid X=x, W=0)}{1-e(x)} \geq \frac{c_0}{\eta},
\]
so $\hat\sigma_\xi(X_t) \geq c := \sqrt{c_0/\eta} > 0$ uniformly.  The unscaled score satisfies $s_t = |\psi_t^{\mathrm{DR}} - \hat\tau(X_t)| \leq 2B\sqrt{2/\eta}$ under bounded outcomes and overlap.  Hence $s_t^{\mathrm{VS}} = s_t/\hat\sigma_\xi(X_t) \leq M/c < \infty$.  Bounded scores imply bounded ACI quantiles $\hat{q}_t \in [0, M/c]$, so the ACI update \eqref{eq:aci} cannot spiral regardless of drift in $P(X_t)$.  This explains why VS-DR-ACI survives Regime~D (Section~\ref{sec:regimed}) while unscaled DR-ACI spirals.  The argument requires that drift preserves the support of $X_t$ so that Assumption~\ref{ass:overlap} continues to hold.
\end{proof}


\section{Application Robustness}
\label{app:robustness}

This appendix provides full heterogeneity and robustness analysis for the Dynamic M-ELO application.

\subsection{Pre-Treatment Balance and Panel Dimensions}\label{app:balance}

\begin{table}[ht]
\centering
\caption{Dynamic M-ELO: Pre-Treatment Balance and Panel Dimensions}\label{tab:melo-balance}
\begin{tabular}{lcccc}
\toprule
& Early Cohorts & Late Cohorts & Diff & $p$-val \\
& (W--Z, T--V) & (M--S, A--L) & & \\
\midrule
\multicolumn{5}{l}{\textit{Panel A: Pre-Treatment Daily Outcome Means}} \\[3pt]
Hidden share & 0.281 & 0.293 & $-0.013$ & $<0.001$ \\
Odd-lot share & 0.601 & 0.601 & 0.000 & 0.607 \\
$\log(1+\text{LitTrades})$ & 6.541 & 6.252 & 0.289 & $<0.001$ \\
$\log(1+\text{Trades})$ & 6.958 & 6.699 & 0.259 & $<0.001$ \\[6pt]
\multicolumn{5}{l}{\textit{Panel B: Panel Dimensions}} \\[3pt]
Daily obs.\ (ticker $\times$ day) & \multicolumn{2}{c}{4,640,505} & & \\
$N$ tickers & \multicolumn{2}{c}{9,691} & & \\
Sample period & \multicolumn{2}{c}{Jan 2023 -- Jun 2025} & & \\
\bottomrule
\end{tabular}
\end{table}

The statistically significant differences in daily trades, volume, and hidden share ($p < 0.01$) reflect the larger share of high-activity tickers with names starting T--Z; these level differences are absorbed by symbol fixed effects in the TWFE specification.  Pre-treatment hidden share is 1.2 percentage points lower for early cohorts ($p < 0.001$), reinforcing the importance of parallel-trends validation.

\subsection{Heterogeneity Analysis}\label{app:heterogeneity}

We stratify CATEs by pre-treatment M-ELO intensity (quartiles based on baseline hidden share):

\begin{table}[ht]
\centering
\caption{Heterogeneous Effects by Pre-Treatment Intensity}\label{tab:melo-heterogeneity}
\begin{tabular}{lccc}
\toprule
Intensity Quartile & Hidden Share ATT & SE & $p$-value \\
\midrule
Q1 (Low, median 15\%) & $+0.049$ & 0.012 & $<0.001$ \\
Q2 (median 22\%) & $+0.060$ & 0.014 & $<0.001$ \\
Q3 (median 31\%) & $+0.020$ & 0.011 & 0.072 \\
Q4 (High, median 45\%) & $-0.007$ & 0.015 & 0.647 \\
\bottomrule
\end{tabular}
\end{table}

The dose-response pattern is monotonically decreasing: tickers with the least prior midpoint usage gain the most from Dynamic M-ELO.  High-intensity tickers (Q4) show no significant effect---they were already utilising midpoint orders extensively, so the faster timer provided no marginal benefit.

\textbf{Volume quintile analysis.}  Stratifying by pre-treatment trading volume reveals a complementary pattern: low-volume stocks exhibit the largest positive CATEs ($+0.036$, $p < 0.001$ for Q1), while high-volume stocks show null or slightly negative effects ($-0.001$, $p = 0.65$ for Q5).  This confirms that Dynamic M-ELO benefits primarily accrue to less liquid securities where midpoint execution was previously scarce.

\subsection{Robustness Tests}\label{app:robustness-tests}

\textbf{Placebo test.}  Shifting the adoption date 180 days earlier yields a placebo ATT of $-0.018$ ($p = 0.06$) for hidden share---opposite sign from the true effect, providing reassurance against confounding pre-trends.  Odd-lot share shows a similarly clean null placebo.  However, lit trades exhibits a significant positive placebo ($\beta = +0.056$, $p < 0.001$), indicating possible confounding pre-trends; the lit trades result should be interpreted with caution.

\textbf{Sample restrictions.}  Effects are robust to dropping test symbols, excluding the first 5 post-adoption days, restricting to Nasdaq-listed securities only, and using a tight window (June 2023--March 2025).

\textbf{Winsorisation.}  Results are unchanged under 1\% and 5\% winsorisation of outcome variables.

\textbf{Multiple testing.}  Hidden share and odd-lot share survive both Bonferroni ($\alpha/7 = 0.007$) and Benjamini--Hochberg FDR corrections at the 5\% level.  Lit trades survives multiple testing corrections but is subject to the pre-trend caveat noted above.

\subsection{Sensitivity Analysis}\label{app:sensitivity}

Unconfoundedness (Assumption~\ref{ass:unconf}) is untestable.  Robustness can be assessed using the omitted variable bias framework of \citet{cinelli2020sensitivity}, reporting E-values \citep{vanderweele2017evalue} that quantify the minimum confounder strength needed to invalidate coverage.

\paragraph{Code Availability.}
Code for DR-ACI and all experiments is available at \url{https://github.com/rockandrolla13/draci}.

\bibliography{references}

\end{document}